\documentclass{article}
\pdfoutput=1


\usepackage[final,nonatbib]{neurips_2021}




\usepackage{cancel}
\usepackage[utf8]{inputenc} 
\usepackage[T1]{fontenc}    
\usepackage{hyperref}       
\usepackage{url}            
\usepackage{booktabs}       
\usepackage{amsmath}
\usepackage{amsfonts}       
\usepackage{nicefrac}       
\usepackage{microtype}      
\usepackage{xcolor}         
\usepackage{graphicx}
\usepackage{subcaption}
\usepackage{floatrow}
\usepackage{placeins}
\usepackage{blindtext}
\usepackage{algorithm}
\usepackage{algpseudocode}


\usepackage{amsmath,amsfonts,bm,amssymb,amsthm,yfonts}









\def\eqref#1{equation~\ref{#1}}









\def\1{\bm{1}}



\def\rt{{\textnormal{t}}}






\def\vx{{\bm{x}}}



\DeclareMathAlphabet{\mathsfit}{\encodingdefault}{\sfdefault}{m}{sl}
\SetMathAlphabet{\mathsfit}{bold}{\encodingdefault}{\sfdefault}{bx}{n}













\def\eee#1#2{\mathbb{E}_{#1}\left[#2\right]}

\newtheorem{theorem}{Theorem}
\newtheorem{definition}{Definition}
\newtheorem{lemma}{Lemma}
\newtheorem{corollary}{Corollary}

\newcommand\Ccancel[2][black]{\renewcommand\CancelColor{\color{#1}}\cancel{#2}}

\definecolor{darkgreen}{RGB}{0,150,0}
\def\vx{\mathbf{x}}

\newcommand\yuandong[1]{\textcolor{orange}{Yuandong: #1}}
\newcommand\tianjun[1]{\textcolor{red}{Tianjun: #1}}
\newcommand\kevin[1]{\textcolor{blue}{Kevin: #1}}
\newcommand\chris[1]{\textcolor{purple}{Chris: #1}}
\newcommand\benoit[1]{\textcolor{cyan}{Benoit: #1}}
\newcommand\brandon[1]{\textcolor{violet}{Brandon: #1}}

\newcommand{\etal}{{et al}.\@ }

\def\ours{\texttt{LaP$^3$}}

\setlength{\textfloatsep}{10pt plus 1.0pt minus 2.0pt}

\title{Learning Space Partitions for Path Planning}

%

\author{%
Kevin Yang$^{1*}$ \quad Tianjun Zhang$^{1*}$ \quad Chris Cummins$^2$ \quad Brandon Cui$^2$ \quad Benoit Steiner$^2$ \\
\textbf{Linnan Wang}$^3$ \quad \textbf{Joseph E. Gonzalez}$^1$ \quad \textbf{Dan Klein}$^1$ \quad \textbf{Yuandong Tian}$^2$\\
$^1$UC Berkeley \quad $^2$Facebook AI Research \quad $^3$Brown University\\
\texttt{\{yangk,tianjunz,jegonzal,klein\}@berkeley.edu}\\
\texttt{\{cummins,bcui,benoitsteiner,yuandong\}@fb.com}\\
\texttt{linnan\_wang@brown.edu}
}

\begin{document}

\maketitle

\begin{abstract}
Path planning, the problem of efficiently discovering high-reward trajectories, often requires optimizing a high-dimensional and multimodal reward function. Popular approaches like CEM~\cite{cem} and CMA-ES~\cite{hansen2016cma} greedily focus on promising regions of the search space and may get trapped in local maxima. DOO~\cite{doo} and VOOT~\cite{voot} balance exploration and exploitation, but use space partitioning strategies independent of the reward function to be optimized. Recently, LaMCTS~\cite{wang2020learning} 
empirically learns to partition the search space in a reward-sensitive manner for black-box optimization. In this paper, we develop a novel formal regret analysis for when and why such an adaptive region partitioning scheme works. We also propose a new path planning method \ours{} which improves the function value estimation within each sub-region, and 
uses a latent representation of the search space.
Empirically, \ours{} outperforms existing path planning methods in 2D navigation tasks, especially in the presence of difficult-to-escape local optima, and shows benefits when plugged into the planning components of model-based RL such as PETS~\cite{chua2018deep}. These gains transfer to highly multimodal real-world tasks, where we outperform strong baselines in compiler phase ordering by up to 39\% on average across 9 tasks, and in molecular design by up to 0.4 on properties on a 0-1 scale. Code is available at \url{https://github.com/yangkevin2/neurips2021-lap3}.  
\end{abstract}

\section{Introduction}

Path planning has been used extensively in many applications, ranging from reinforcement learning~\cite{chua2018deep,hafner2019learning,hafner2020mastering} and robotics~\cite{mac2016heuristic,ratliff2009chomp,li2004iterative} to biology~\cite{kroer2016sequential}, chemistry~\cite{segler2018planning}, material design~\cite{kajita2020autonomous}, and compiler optimization~\cite{triantafyllis2003compiler}. The goal is to find the most rewarding trajectory (i.e., state-action sequence) $\vx = (s_0, a_0, s_1, \ldots, s_n)$ in the search space $\Omega$: $\vx^* = \arg\max_{\vx\in\Omega} f(\vx)$, where $f(\vx)$ is the reward.

In this work, we focus on deterministic path planning problems with long trajectories $\vx$, and discontinuous and/or multimodal reward functions $f$. Such high-dimensional non-convex optimization problems exist in many real domains, both continuous and discrete. While we could always find near-optimal $\vx$ by random sampling given an infinite query budget, in practice we prefer a sample-efficient method that achieves high-reward trajectories with fewer queries of the reward function $f$.  

While global methods like Bayesian Optimization (BO)~\cite{brochu2010tutorial} may struggle with limited samples and high-dimensional spaces, classic approaches like CEM~\cite{cem} and CMA-ES~\cite{hansen2016cma} learn a local model around promising trajectories. For example, CEM tracks a population of trajectories and repeatedly re-samples its population according to the highest-performing trajectories from the previous generation. On the other hand, such a focus can trap CEM in local optima, as confirmed empirically (Sec. \ref{sec:synthetic_experiments}).

Other recent approaches, such as VOOT~\cite{voot} and DOO~\cite{doo}, use a (recursive) region partitioning scheme: they split the search space $\Omega$ into sub-regions $\Omega=\Omega_1 \cup \ldots \cup \Omega_k$, then invest more samples into promising sub-regions while continuing to explore other regions via an \emph{upper confidence bound} (UCB). While such exploration-exploitation procedures adaptively focus on promising sub-regions and lead to sub-linear regret and optimality guarantees, their \emph{region partition} procedure is manually designed by humans and remains non-adaptive. For example, DOO partitions the space with uniform axis-aligned grids and VOOT with Voronoi cells, both independent of the reward $f$ to be optimized. 


Recently, Wang \etal proposed LaNAS~\cite{wang2021sample} and LaMCTS~\cite{wang2020learning}, which \emph{adaptively} partition the search regions based on sampled function values, and focus on good regions. They achieve strong empirical performance on Neural Architecture Search (NAS) and black-box optimization, outperforming many existing methods including evolutionary algorithms and BO. Notably, in recent NeurIPS'20 black-box optimization challenges, two teams that use variants of LaMCTS ranked 3rd~\cite{sazanovich2020solving} and 8th~\cite{kim2020adaptive}. 

In this paper, we provide a simple theoretical analysis of LaMCTS to reveal the underlying principles of adaptive region partitioning, an analysis missing in the original work. Based on this analysis, we propose \textbf{La}tent Space \textbf{P}artitions for \textbf{P}ath \textbf{P}lanning (\ours{}), a novel optimization technique for path-planning. 
Unlike LaMCTS, \ours{} uses a latent representation of the search space. 
Additionally, we use the maximum (instead of the mean) as the node score to improve sample efficiency, verified empirically in Sec. \ref{sec:analysis}. Both changes are motivated by our theoretical analysis. 

We verify \ours{} on several challenging path-planning tasks, including 
2D navigation environments from past work with difficult-to-escape local optima,
and 
real-world planning problems in compiler optimization and molecular design. In all tasks, \ours{} demonstrates substantially stronger exploration ability to escape from local optima compared to several baselines including CEM, CMA-ES and VOOT. On compiler phase ordering, we achieve on average 39\% and 31\% speedup in execution cycles comparing to -O3 optimization and OpenTuner~\cite{ansel2014opentuner}, two widely used optimization techniques in compilers. On molecular design, \ours{} outperforms all of our baselines in generating molecules with high values of desirable properties, beating the best baseline in average property value by up to $0.4$ on properties in a $[0,1]$ range. Additionally, extensive ablation studies show factors that affect the quality of planning and verify the theoretical analysis. 

\ours{} is a general planning technique and can be readily plugged into existing algorithms with path planning components. For example, we apply \ours{} to PETS~\cite{chua2018deep} in model-based RL and observe substantially improved performance for high-dimensional continuous control and navigation, compared to CEM as used in the original PETS framework.

\def\eee#1#2{\mathbb{E}_{#1}\left[#2\right]}
\def\ee#1{\mathbb{E}\left[#1\right]}

\def\cT{\mathcal{T}}
\def\vol{\mathrm{vol}}
\def\cO{\mathcal{O}}
\def\kgood{\mathcal{K}_{\good}}
\def\kbad{\mathcal{K}_{\bad}}
\def\cV{\mathcal{V}}
\def\cS{\mathcal{S}}
\def\cD{\mathcal{D}}
\def\rt{\mathrm{root}}
\def\leaf{\mathrm{leaf}}
\def\child{\mathrm{child}}

\def\init{\mathrm{init}}
\def\good{\mathrm{good}}
\def\bad{\mathrm{bad}}
\def\parent{\mathrm{parent}}
\def\partition{\mathrm{par}}
\def\thres{\mathrm{thres}}

\vspace{-0.2em}
\section{Latent Space Monte Carlo Tree Search (LaMCTS)}

\begin{algorithm}[b]
    \small
	\caption{\ours{} Pseudocode for Path Planning. Improvements over LaMCTS in \textcolor{darkgreen}{green}.}
\label{alg:plalam}
	\begin{algorithmic}[1]
	\State {\bfseries Input:} Number of rounds $T$, Environment Oracle: $f(\vx)$, \textcolor{darkgreen}{Dataset $\cD$, Sampling Latent Model $h(\vx)$, Partitioning Latent Model $s(\vx)$.}
	\State {\bfseries Parameters:} Initial \#samples $N_\init$, Re-partitioning interval $N_{\partition}$, Node partition threshold $N_\thres$, UCB parameter $C_p$.
	\State \textcolor{darkgreen}{Pre-train $h(\cdot)$ on $\cD$ when $\cD \neq \emptyset$}. 
	\State Set region partition $\cV_0 = \{ \Omega \}$. 
	\State Draw $N_\init$ samples uniformly from $\cS_0 = \{(\vx_i, f(\vx_i))\}_{i=1}^{N_\init} \subset \Omega$. 
	\For{$t = 0, \dots, T-N_\init-1$}
	\If{$t$ divides $N_\partition$}
	\State \textcolor{darkgreen}{Train/fine-tune latent model $h(\cdot)$ using samples $\cS_t \cup \cD$ (Eqn.~\ref{eq:latent-training})}.   
	\State Re-learn region partition $\cV_t \leftarrow \mathrm{Partition}(\Omega, \cS_t, N_{\thres}, \textcolor{darkgreen}{s(\cdot)})$ \textcolor{darkgreen}{in latent space $\Phi_s$ of $s(\cdot)$}.
	\EndIf
	\For{$k := \rt$, $k \notin \cV_\leaf$}
	    \State $k \leftarrow \displaystyle\arg\max_{\Omega_c \in \child(\Omega_{k})} b_c$, where $b_c := \left[\Ccancel[darkgreen]{\displaystyle\frac{1}{n(\Omega_c)}\displaystyle\sum_{\vx_i \in \Omega_c} f(\vx_i)}\hspace{0.3em} \textcolor{darkgreen}{\displaystyle\max_{\vx_i \in \Omega_c} f(\vx_i)} + C_p \sqrt{\frac{2\log n(\Omega_{k})}{n(\Omega_c)}  }\right]$. 
	\EndFor
	\State \textcolor{darkgreen}{Initialize CMA-ES using encodings of $\cS_t \cap \Omega_k$ via $h(\cdot)$.} Here $\Omega_k$ is the chosen leaf sub-region.
	\State $\cS_t \leftarrow \cS_{t-1} \cup \{(\vx_t, f(\vx_t))\}$, where $\vx_t$ is \textcolor{darkgreen}{drawn from CMA-ES and decoded via $h^{-1}(\cdot)$.}
	\EndFor
   \end{algorithmic}
\end{algorithm}

\begin{figure}[bt]
    \centering
    \includegraphics[width=\textwidth]{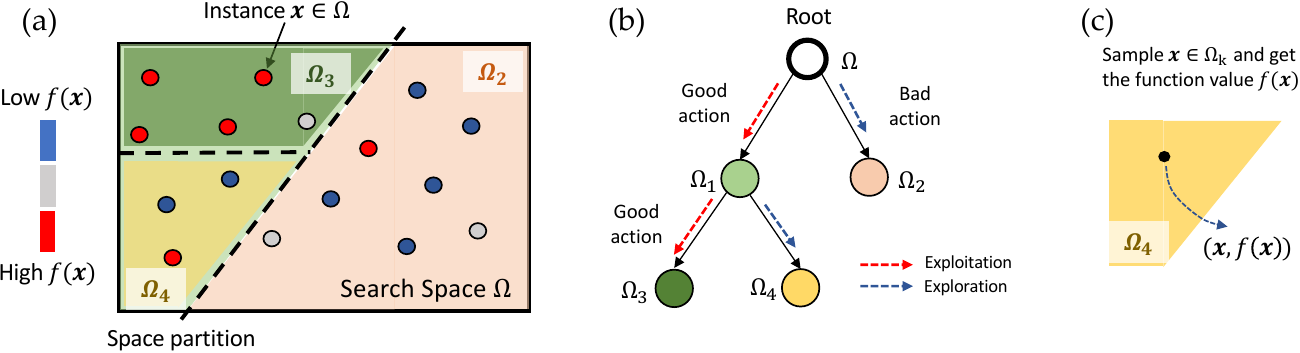}
    \vspace{-0.2in}
    \caption{\small \ours{} extends LaMCTS~\cite{wang2020learning} to path planning. \textbf{(a)} Starting from a search space $\Omega$, both \ours{} and LaMCTS first draw a few samples $\vx\in \Omega$, then learn to partition $\Omega$ into a sub-region $\Omega_1$ with good samples (high $f(\vx)$) and a sub-region $\Omega_2$ with bad samples (low $f(\vx)$). Compared to LaMCTS, \ours{} uses a \emph{latent space} and reduces the dimensionality of the search space. \textbf{(b)} Sampling follows the learned recursive space partition, focusing on good regions while still exploring bad regions using UCB. \ours{} uses the \emph{maximum} of the sampled value in a region ($\max_{\vx_i\in\Omega} f(\vx_i)$) as the node value, while LaMCTS uses the mean. \textbf{(c)} Upon reaching a leaf, new data points are sampled within the region and the space partition is relearned.}
    \label{fig:overview}
\end{figure}

LaMCTS~\cite{wang2020learning} is recently proposed to solve black-box optimization problems $\vx^* = \arg\max_\vx f(\vx)$ via recursively learning $f$-dependent region partitions. Fig.~\ref{fig:overview} and Alg.~\ref{alg:plalam} show the details of LaMCTS as well as our proposed approach \ours{} (formally introduced in Sec.~\ref{sec:method}) for comparison. 

LaMCTS starts with $N_\init$ random samples of the entire search space $\Omega$ (line 5 in Alg.~\ref{alg:plalam}). For a region $\Omega_k$, let $n(\Omega_k)$ be the number of samples within. LaMCTS dictates that, if $n(\Omega_k) \ge N_\thres$, then $\Omega_k$ is partitioned into disjoint sub-regions $\Omega_k=\Omega_\good\cup \Omega_\bad$ as its \emph{children} (Fig.~\ref{fig:overview}(a)-(b), line 9 in Alg.~\ref{alg:plalam}, the function $\mathrm{Partition}$). Intuitively, $\Omega_\good$ contains promising samples with high $f$, while $\Omega_\bad$ contains samples with low $f$. Unlike DOO and VOOT, such a partition is learned using $\cS_t\cap \Omega_k$, our samples so far in the region, and is thus dependent on the function $f$ to be optimized. 

Given tree-structured sub-regions, new samples are mostly drawn from promising regions and occasionally from other regions for exploration. This is achieved by Monte Carlo Tree Search (MCTS)~\cite{browne2012survey} (line 11-13): at each tree branching, the UCB score $b$ is computed to balance exploration and exploitation (line 12).  
Then the subregion with highest UCB score is selected (e.g., it may have high $f$ and/or low $n$). This is done recursively until a leaf sub-region $\Omega'$ is reached. Then a new sample $\vx$ is drawn from $\Omega'$ (line 15) either uniformly, or from a local model constructed by an existing optimizer (e.g., TuRBO~\cite{eriksson2019scalable}, CMA-ES~\cite{hansen2016cma}), in which case LaMCTS becomes a meta-algorithm. When more samples are collected, regions are further partitioned and the tree gets deeper.

Finally, the function $\mathrm{Partition}$ in Alg.~\ref{alg:plalam} is defined as follows: 
first a 2-class K-means on $(\vx, f(\vx))$ is used to create positive/negative sample groups. Next, a SVM classifier is used to learn the decision boundary (hence the partition), so that samples with high $f(\vx)$ fall into $\Omega_\good$, and samples with low $f(\vx)$ fall into $\Omega_\bad$ (Fig.~\ref{fig:overview}(a)). See Appendix \ref{sec:alg_partition} for the pseudo code. The partition boundary can also be re-learned after more samples are collected (line 9).

\vspace{-0.1in}
\section{A Theoretical Understanding of Space Partitioning}\label{sec:theory}
\vspace{-0.05in}
While LaMCTS~\cite{wang2020learning} shows strong empirical performance, it contains several components with no clear theoretical justification. Here we attempt to give a formal regret analysis when sub-regions $\{\Omega_k\}$ are \emph{fixed} and all at the same tree level, and the function $f$ is deterministic. We leave further analysis of tree node splitting and evolution of hierarchical structure to future work. 

Despite the drastic simplification, our regret bound still shows why an $f$-dependent region partition is helpful. By showing that a better regret bound can be achieved by a clever region partition as empirically used in the $\mathrm{Partition}$ function in Alg.~\ref{alg:plalam}, we justify the design of LaMCTS. Furthermore, our analysis suggests several empirical improvements over LaMCTS and motivates the design of \ours{}, which outperforms multiple classic approaches on hard path planning problems.

\def\pr{\mathbb{P}}

\begin{figure}
    \centering
    \includegraphics[width=\textwidth]{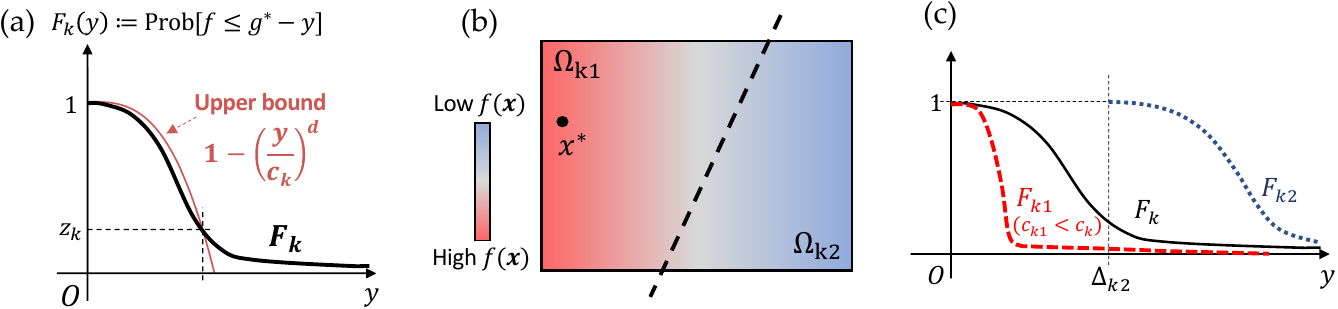}
    \vspace{-0.2in}
    \caption{\small Theoretical understanding of space partitioning. \textbf{(a)} Definition of $(z_k, c_k)$-diluted region $\Omega_k$ (Def.~\ref{def:diluted}). \textbf{(b)} Partition of region $\Omega_k$ into good region $\Omega_{k1}$ and bad region $\Omega_{k2}$. Optimal solution $\vx^* \in \Omega_{k1}$. \textbf{(c)} After space partitioning, $F_k$ is split into $F_{k1}$ and $F_{k2}$. The good region $F_{k1}$ has much smaller $c_{k1}$ while the bad region has much larger best-to-optimality gap $\Delta_{k2}$. As a result, the expected total regret decreases.} 
    \label{fig:theory}
\end{figure}

\vspace{-0.05in}
\subsection{Regret Analysis with Fixed Sub-Regions}

We consider the following setting. Suppose we have $K$ $d$-dimensional regions $\{\Omega_k\}_{k=1}^K$, and $n_t(\Omega_k)$ is the visitation count at iteration $t$. The global optimum $\vx^*$ resides in some unknown region $\Omega_{k^*}$. At each iteration $t$, we visit a region $\Omega_k$, sample (uniformly or otherwise) a data point $\vx_t \in \Omega_k$, and retrieve its \emph{deterministic} function value $f_t = f(\vx_t)$. In each region $\Omega_k$, define $\vx^*_k := \arg\max_{\vx\in\Omega_k} f(\vx)$ and the maximal value $g^*(\Omega_k) = f(\vx^*_k)$. The maximal value \emph{so far} at iteration $t$ is $g_t(\Omega_k) = \max_{t'\le t} f(\vx_{t'})$. It is clear that $g_t \le g^*$ and $g_t\rightarrow g^*$ when $t \rightarrow +\infty$.

We define the \emph{confidence bound} $r_t = r_t(\Omega_k)$ so that with high probability, the following holds: 
\begin{equation}
    g_t(\Omega_k) \ge g^*(\Omega_k) -  r_t(\Omega_k) \label{eq:lower-bound}
\end{equation}
At iteration $t$, we pick region $k_t$ to sample based on the upper confidence bound: $k_t = \arg\max_{k} g_t(\Omega_k) + r_t(\Omega_k)$. Many different confidence bounds can be applied; for convenience in this analysis, we use the ``ground truth'' bound from the cumulative density function (CDF) of $f$ within the region $\Omega_k$ (Please check Appendix \ref{sec:proofs} for all proofs): 
\begin{lemma}
\label{thm:max-bound}
    Let $F_{k}(y) := \pr\left[f(\vx)\le g^*(\Omega_k) - y|\vx\in\Omega_k\right]$ be a strictly decreasing function, and let $r_{k,t}(\Omega_k) := F_k^{-1}\left(\delta^{1/n_t(\Omega_k)}\right)$. Then Eqn.~\ref{eq:lower-bound} holds with probability $1 - \delta$.
\end{lemma}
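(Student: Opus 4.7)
The plan is to reduce the one-sided confidence bound directly to a CDF computation by exploiting that $g_t(\Omega_k)$ is a maximum of sampled values. Let $n := n_t(\Omega_k)$ and let $\vx_{i_1}, \ldots, \vx_{i_n}$ denote the samples drawn from $\Omega_k$ up through iteration $t$. I will treat these as i.i.d.\ draws from the sampling law on $\Omega_k$, which is the standing interpretation of ``uniformly or otherwise'' in the setup (applied conditionally on the in-region visitation count). By definition of $F_k$ we have $\Pr[f(\vx_{i_j}) \le g^*(\Omega_k) - y] = F_k(y)$ for each $j$.

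The first key step is the set-theoretic equivalence
\begin{equation*}
\{g_t(\Omega_k) < g^*(\Omega_k) - r_t(\Omega_k)\} \;=\; \bigcap_{j=1}^{n} \{f(\vx_{i_j}) < g^*(\Omega_k) - r_t(\Omega_k)\},
\end{equation*}
which holds because the maximum falls below a threshold iff every entry does. Weakening strict to non-strict inequality on each marginal and invoking independence then yields
\begin{equation*}
\Pr\bigl[g_t(\Omega_k) < g^*(\Omega_k) - r_t(\Omega_k)\bigr] \;\le\; \prod_{j=1}^{n} F_k\bigl(r_t(\Omega_k)\bigr) \;=\; F_k\bigl(r_t(\Omega_k)\bigr)^{n}.
\end{equation*}

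The second step plugs in the definition $r_t(\Omega_k) = F_k^{-1}\bigl(\delta^{1/n}\bigr)$; strict monotonicity of $F_k$ makes the inverse well-defined and single-valued, so $F_k(r_t(\Omega_k))^n = (\delta^{1/n})^n = \delta$. Passing to the complement delivers the claim $\Pr[g_t(\Omega_k) \ge g^*(\Omega_k) - r_t(\Omega_k)] \ge 1 - \delta$, which is exactly Eqn.~\ref{eq:lower-bound}.

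\textbf{Main obstacle.} No concentration inequality is needed here: this is an exact CDF-based upper bound on the maximum of $n$ i.i.d.\ samples, which is why it is essentially tight. The chief subtleties are modelling rather than technical. First, treating the $n$ sampled points in $\Omega_k$ as i.i.d.\ is a mild abstraction of the full LaMCTS dynamics (where the decision to enter $\Omega_k$ is adaptive), and the clean bound above really holds conditionally on $n_t(\Omega_k)$ and on the in-region sampling distribution matching the law used to define $F_k$. Second, the $<$ vs.\ $\le$ slack is lossless whenever $F_k$ has no atom at the threshold, which is guaranteed by the strict monotonicity assumption in the statement.
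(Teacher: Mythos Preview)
Your proof is correct and follows essentially the same approach as the paper: both use independence to write $\Pr[g_t < g^* - r_t] = F_k(r_t)^{n_t}$ (the paper with equality, you with a cautious $\le$ after weakening strict to non-strict), then substitute $r_t = F_k^{-1}(\delta^{1/n_t})$. Your handling of the $<$ vs.\ $\le$ issue and your remarks on the i.i.d.\ modelling assumption are in fact more careful than the paper's own proof, which writes exact equalities throughout.
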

Here $F^{-1}_k$ is the inverse function of $F_k$ and randomness arises from sampling within $\Omega_k$. Since $F_k$ is a strictly decreasing function, $F^{-1}_k$ exists and is also strictly decreasing. By definition, $F_k\in [0,1]$, $F_k(0) = 1$ and $F_k^{-1}(1)=0$. We then define the \emph{dilution} of each region as follows:

\begin{definition}[$(z_k,c_k)$-dilution]
\label{def:diluted}
A region $\Omega_k$ is \emph{$(z_k, c_k)$-diluted} if there exist $z_k, c_k$ such that $F_k(y) \le 1 - (y/c_k)^d$ for $y \in [0,c_k(1-z_k)^{1/d}]$, where $z_k$ is the smallest $F_k(y)$ to make the inequality hold.  
\end{definition}

The intuition for dilution for a given region, as depicted in Fig.~\ref{fig:theory}(a), is that all but $z_k$ fraction of the region has function value close to the maximum, with "close" defined based on $c_k$ (smaller $c_k$ implies a stricter definition of ``close''). 
Obviously if $\Omega_k$ is $(z_k, c_k)$-diluted then it is $(z_k', c'_k)$-diluted for any $c'_k \ge c_k$ and $z_k' \ge z_k$. Therefore, we often look for the smallest $z_k$ and $c_k$ to satisfy the condition. If a region $\Omega_k$ has small $c_k$ and $z_k$, we say it is \emph{highly concentrated}. For example, if $f(\vx)$ is mostly constant within a region, then $c_k$ is very small since $F_k(y)$ drops to $0$ very quickly. In such a case, most of the region's function values are concentrated near the maximum, making it easier to optimize. 

While the definition of concentration may be abstract, we show it is implied by Lipschitz continuity:
\begin{corollary} 
\label{corollary:lipschitz}
If a region $\Omega_k$ is $L_k$-Lipschitz continuous, i.e., $|f(\vx) - f(\vx')| \le L_k\|\vx-\vx'\|_2$, and there exists an $\epsilon_0$-ball $B(\vx^*_k, \epsilon_0)\subseteq \Omega_k$, then with uniform sampling, $\Omega_k$ is $(1 - \epsilon^d_0 \tilde V^{-1}_k, L_k\sqrt[d]{\tilde V_k})$-diluted. Here $\tilde V_k := V_k / V_0$ is the relative volume with respect to the unit sphere volume $V_0$.
\end{corollary}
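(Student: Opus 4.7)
The plan is to combine the Lipschitz bound on $f$ with a volume computation of a ball around the maximizer $\vx^*_k$, and then verify that the range $[0, c_k(1-z_k)^{1/d}]$ arising from the claimed parameters coincides exactly with the range over which the ball is contained in $\Omega_k$.

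First I would translate ``low function value'' into ``far from $\vx^*_k$'' using Lipschitz continuity. Since $g^*(\Omega_k) = f(\vx^*_k)$, the bound $|f(\vx) - f(\vx^*_k)| \le L_k\|\vx-\vx^*_k\|_2$ implies that whenever $\|\vx - \vx^*_k\|_2 \le y/L_k$, we have $f(\vx) \ge g^*(\Omega_k) - y$. Contrapositively,
\begin{equation*}
\{\vx \in \Omega_k : f(\vx) \le g^*(\Omega_k) - y\} \;\subseteq\; \Omega_k \setminus B(\vx^*_k, y/L_k).
\end{equation*}
Under the uniform sampling assumption, taking volumes and dividing by $V_k$ yields
\begin{equation*}
F_k(y) \;\le\; 1 - \frac{\mathrm{vol}\bigl(B(\vx^*_k, y/L_k) \cap \Omega_k\bigr)}{V_k}.
\end{equation*}

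Second, I would use the assumption $B(\vx^*_k, \epsilon_0) \subseteq \Omega_k$ to remove the intersection. As long as $y/L_k \le \epsilon_0$, the entire ball $B(\vx^*_k, y/L_k)$ lies in $\Omega_k$, so its volume is just $V_0 (y/L_k)^d$. Plugging in gives
\begin{equation*}
F_k(y) \;\le\; 1 - \frac{V_0 (y/L_k)^d}{V_k} \;=\; 1 - \frac{(y/L_k)^d}{\tilde V_k} \;=\; 1 - \left(\frac{y}{L_k \sqrt[d]{\tilde V_k}}\right)^{\!d} \;=\; 1 - \left(\frac{y}{c_k}\right)^{\!d},
\end{equation*}
which is exactly the inequality required by Definition~\ref{def:diluted} with $c_k = L_k \sqrt[d]{\tilde V_k}$.

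Third, I would match the range of validity. The bound above holds precisely when $y \le L_k\epsilon_0$. With $z_k = 1 - \epsilon_0^d\,\tilde V_k^{-1}$, a direct computation gives
\begin{equation*}
c_k(1-z_k)^{1/d} \;=\; L_k \sqrt[d]{\tilde V_k}\cdot \bigl(\epsilon_0^d\,\tilde V_k^{-1}\bigr)^{1/d} \;=\; L_k\epsilon_0,
\end{equation*}
so the range $[0,\,c_k(1-z_k)^{1/d}]$ is exactly the range where the ball argument applies. At the right endpoint the inequality becomes $F_k(L_k\epsilon_0) \le 1 - (L_k\epsilon_0/c_k)^d = z_k$, confirming that $z_k$ is indeed the threshold attained at the boundary, consistent with the ``smallest $F_k(y)$'' phrasing in the definition.

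The only subtle point, and the main place to be careful, is the containment of the ball. Once $y$ exceeds $L_k\epsilon_0$, the ball $B(\vx^*_k, y/L_k)$ may stick out of $\Omega_k$ (especially for non-convex regions), and we can no longer equate its volume with $V_0(y/L_k)^d$; this is exactly why the corollary restricts the inequality to $y \le c_k(1-z_k)^{1/d}$. Beyond this, the argument is a short chain of inequalities and a volume-ratio computation, with no further technical obstacles.
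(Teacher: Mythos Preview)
Your proposal is correct and follows essentially the same argument as the paper's proof: use Lipschitz continuity to show that points with $f(\vx)\le g^*-y$ must lie outside $B(\vx^*_k,y/L_k)$, then use uniform sampling and the ball-containment assumption to bound $F_k(y)$ by $1-(y/c_k)^d$ for $y\le L_k\epsilon_0$. Your explicit verification that $c_k(1-z_k)^{1/d}=L_k\epsilon_0$ and your remark on why the argument fails beyond this range are slightly more detailed than the paper's version, but the approach is identical.
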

Typically, a smoother function (with small $L_k$) and large $\epsilon_0$ yield a less diluted (and more concentrated) region. However, the concept of dilution (Def.~\ref{def:diluted}) is much broader. For example, if we shuffle function values within $\Omega_k$, Lipschitz continuity is likely to break but Def.~\ref{def:diluted} still holds.

Now we will bound the total regret. Let $R_t(a_t) := f^* - g_t(\Omega_{a_t}) \ge 0$ be the regret of picking $\Omega_{a_t}$ and $R(T) := \sum_{t=1}^T R_t(a_t)$ be the total regret, where $T$ is the total number of samples (queries to $f$). Define the \textit{gap} of each region $\Delta_k := f^* - g^*(\Omega_k)$ and split the region indices into $\kgood := \{k: \Delta_k \le \Delta_0\}$ and $\kbad := \{k: \Delta_k \ge \Delta_0\}$ by a threshold $\Delta_0$. $C_\good := \left(\sum_{k \in \kgood} c^d_k\right)^{1/d}$ and $C_\bad := \left(\sum_{k \in \kbad} c^d_k\right)^{1/d}$ are the $\ell_d$-norms of the $c_k$ in these two sets. Finally, $M := \sup_{\vx\in\Omega} f(\vx) - \inf_{\vx\in\Omega} f(\vx)$ is the maximal gap between function values. Treating each region $\Omega_k$ as an arm and applying a regret analysis similar to multi-arm bandits~\cite{slivkins2019introduction}, we obtain the following theorem:
\begin{theorem}
\label{thm:regret}
Suppose all $\{\Omega_k\}$ are $(z_k, c_k)$-diluted with $z_k\le \eta/T^3$ for some $\eta > 0$. 
The total expected regret $\ee{R(T)} = \cO\left[C_\good  \sqrt[d]{T^{d-1}\ln T} +  M (C_\bad / \Delta_0)^d \ln T +  KM\eta /T\right]$.   
\end{theorem}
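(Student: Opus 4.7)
The plan is a UCB-style regret decomposition treating each sub-region $\Omega_k$ as an arm, with Lemma~\ref{thm:max-bound} playing the role of the concentration tool. First I would instantiate that lemma with a single parameter $\delta$ (to be tuned at the end) and take a union bound over the $K$ arms and $T$ rounds to obtain a ``clean'' event $\mathcal{E}$, of probability at least $1-KT\delta$, on which $g_t(\Omega_k)\ge g^*(\Omega_k)-r_{k,t}$ holds for every $(k,t)$. Outside $\mathcal{E}$ the per-round regret is crudely at most $M$, so the total contribution to $\ee{R(T)}$ from this failure event is bounded by $KT^2 M\delta$; the choice $\delta=\eta/T^3$ (forced by the dilution regime as explained next) collapses this to the $KM\eta/T$ term in the theorem.

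The second step converts the implicit bound $r_{k,t}=F_k^{-1}\bigl(\delta^{1/n_t(\Omega_k)}\bigr)$ into an explicit one. Inverting the dilution inequality $F_k(y)\le 1-(y/c_k)^d$ gives $r_{k,t}\le c_k\bigl(1-\delta^{1/n_t(\Omega_k)}\bigr)^{1/d}\le c_k\bigl(\ln(1/\delta)/n_t(\Omega_k)\bigr)^{1/d}$, valid only on the range $\delta^{1/n_t(\Omega_k)}\ge z_k$. The assumption $z_k\le\eta/T^3=\delta$ is exactly what guarantees this for all $n_t(\Omega_k)\ge 1$, since $\delta^{1/n_t(\Omega_k)}\ge\delta\ge z_k$; and $\ln(1/\delta)=\cO(\ln T)$.

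The third step is the main regret decomposition on $\mathcal{E}$. The UCB rule together with $\mathcal{E}$ yields $f^*\le g_t(\Omega_{k^*})+r_{k^*,t}\le g_t(\Omega_{a_t})+r_{a_t,t}$, so per-round $R_t(a_t)\le r_{a_t,t}$, and since $g_t(\Omega_{a_t})\le g^*(\Omega_{a_t})$ also $\Delta_{a_t}\le r_{a_t,t}$. For a bad arm $k\in\kbad$, the latter inequality combined with the explicit form of $r_{k,t}$ forces $n_{k,T}\lesssim c_k^d\ln T/\Delta_k^d$; using the crude $R_t\le M$ per pull and then $\Delta_k\ge\Delta_0$ sums the bad-arm contribution to $M\ln T\sum_{k\in\kbad}c_k^d/\Delta_k^d\le M(C_\bad/\Delta_0)^d\ln T$. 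For good arms, I would use the sharper bound $R_t\le r_{a_t,t}$: summing over the $n_{k,T}$ visits of arm $k$ via integral comparison gives $\sum_{n=1}^{n_{k,T}}c_k(\ln T/n)^{1/d}\lesssim c_k(\ln T)^{1/d}\,n_{k,T}^{(d-1)/d}$, and then Hölder's inequality with conjugate exponents $(d,\,d/(d-1))$ under the budget $\sum_k n_{k,T}\le T$ yields the $C_\good\bigl(T^{d-1}\ln T\bigr)^{1/d}$ term. Adding the three contributions gives the claimed bound.

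The main obstacle I anticipate is the delicate coupling between $\delta$ and $z_k$: the dilution inequality only holds on the range $p\ge z_k$, so $\delta$ cannot be taken arbitrarily small without violating the assumption $z_k\le\delta$, but $\delta$ cannot be taken too large without blowing up the $KT^2 M\delta$ failure budget; the value $\delta=\eta/T^3$ is essentially the unique choice making both contributions scale like $\eta/T$. A secondary subtlety is the Hölder step for the good arms, which is what converts $\sum_{k\in\kgood}c_k\,n_{k,T}^{(d-1)/d}$ into the vector-norm coefficient $C_\good=(\sum_{k\in\kgood}c_k^d)^{1/d}$, rather than a loose $|\kgood|\cdot\max_k c_k$ prefactor; without this step one would pick up an extra $K^{1/d}$ that the theorem avoids.
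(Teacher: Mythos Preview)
Your proposal is correct and follows essentially the same route as the paper's proof: the same union-bound setup with $\delta=\eta/T^3$, the same inversion of the dilution condition (which the paper isolates as a separate lemma), the same good/bad arm split with the visitation-count bound for bad arms and the integral-plus-H\"older argument for good arms, and the same $KT^2M\delta$ failure budget. Your per-round bound $R_t(a_t)\le r_{a_t,t}$ is in fact slightly sharper than the paper's stated $R_t(a_t)\le 2r_{a_t,t}$ (though the paper's own chain of inequalities already contains your tighter version), but this constant factor is absorbed into the $\cO(\cdot)$.
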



\subsection{Implications of Theorem~\ref{thm:regret}} \label{sec:implications}
\textbf{The effect of space partitioning}. Reducing $\{c_k\}$ results in a smaller regret $R(T)$. Thus if we can partition $\Omega_k$ into two sub-regions $\Omega_{k1}$ and $\Omega_{k2}$ such that the good partition $\Omega_{k1}$ has smaller $c_{k1} < c_k$ and the bad partition $\Omega_{k2}$ has larger $\Delta_{k2} > \Delta_0$ and falls into $\kbad$, then we can improve the regret bound (Fig.~\ref{fig:theory}(b)-(c)). This coincides with the $\mathrm{Partition}$ function of LaMCTS very well: it samples a few points in $\Omega_k$, and trains a classifier to separate high $f$ from low $f$. On the other hand, if we partition a region $\Omega_k$ randomly, e.g., each $f(\vx)$ is assigned to either $\Omega_{k1}$ or $\Omega_{k2}$ at random, then statistically $F_{k1} = F_{k2} = F_k$ and $c_{k1} = c_{k2} = c_k$, which \emph{increases} the regret bound. Therefore, the partition needs to be \emph{informed} by data that have already been sampled within the region $\Omega_k$.

\textbf{Recursive region partitioning}. In Theorem~\ref{thm:regret}, we assume all regions $\{\Omega_k\}$ have fixed $c_k$ and $z_k$, so the bound breaks for large enough $T$ (as $\eta/T^3$ eventually becomes smaller than any fixed $z_k$). However, as LaMCTS conducts further internal partitioning within $\Omega_k$, its $c_k$ and $z_k$ keep shrinking with more samples $T$. If each split leads to slightly fewer bad $f$ (i.e., lighter ``tail''), with the ratio being $\gamma < 1$, then by the definition of CDF, $z_k$ is the probability mass of the tail and thus $z_k \sim \gamma^{- T / N_\mathrm{par}}$. This would yield $z_k \le \eta / T^3$ for all $T$, since $\gamma^{-T}$ decays faster than $1/T^3$ and Theorem~\ref{thm:regret} would hold for all $T$. See Appendix \ref{sec:empirical_zk} for empirical verification of decaying $z_k$.  

\subsection{Related Work and Limitations} \label{sec:related_works}
While related to Lipschitz bandits~\cite{magureanu14} and coarse-to-fine deterministic function optimization like DOO and SOO~\cite{munos2014bandits}, our analysis is fundamentally different. We have discussed how $f$-dependent region partitioning and a data-driven learning procedure affect the regret bound, which to our knowledge has not been previously addressed. See Appendix \ref{sec:theorem_remarks} for further remarks on Theorem~\ref{thm:regret}. 

There is more work to be done to fully understand how LaMCTS works. In particular, we did not analyze when to split a node (e.g. how many samples we need to collect before making a decision), or the effect of relearning the space partition. We also have not considered stochastic reward functions, where the maximum function value in the sub-region may no longer be the best metric of goodness. We leave these to future work. 

\section{\ours{} for Path Planning}\label{sec:method}
Based on our analysis, we propose \ours{}, which extends LaMCTS to path planning, a problem with temporal structure. \ours{} outperforms baseline path planning approaches in both continuous and discrete path planning problems. 
Here we represent trajectories as action sequences $\vx=(a_0, a_1, \ldots, a_{n-1})$ and treat them as high-dimensional vectors $\vx$ in the trajectory space $\Omega$. 



Thus, \ours{} searches over the space $\Omega$, recursively partitioning $\Omega$ into subregions based on trajectory reward, and sampling from subregions using CMA-ES~\cite{hansen2016cma} (which is faster than TuRBO~\cite{eriksson2019scalable} used in the original LaMCTS). We emphasize again that \ours{}'s region partitioning procedure is fully adaptive, in contrast to traditional MCTS approaches such as VOOT, which only partition the trajectory space based on one action at a time.

Additionally, we have made several improvements over the original LaMCTS, as detailed in Algorithm \ref{alg:plalam}. First, we use the maximal value $\max_{i\in \Omega_k} f(\vx_i)$ rather than the mean value $\frac{1}{n(\Omega_k)}\sum_{i\in \Omega_k} f(\vx_i)$ as the metric of goodness for each node $k$ (and its associated region $\Omega_k$). This is driven by Theorem~\ref{thm:regret}, which gives a regret bound based on maximum values. Intuitively, using the mean value would cause the algorithm to be slow to respond to newly discovered territory: it takes time for the mean metric to boost, and we may miss important leaves. We show the difference empirically in Sec. \ref{sec:analysis}. 


Second, Theorem~\ref{thm:regret} suggests that a lower-dimensional (smaller $d$) and smoother (smaller $c_k$) representation leads to lower regret. Therefore, \ours{} employs a latent space as described below. 

\subsection{Latent Spaces For Partitioning and Sampling}

\ours{} leverages a latent space $\Phi_s$ for the \textit{partition} space, by passing $\Omega$ through some encoder $s$. That is, we disentangle the \textit{sampling} space $\Omega$ from which we sample new candidate trajectories, from the \textit{partition} space $\Phi_s$ on which we construct the search space partition. Critically, we do not need $s^{-1}$: we never decode from $\Phi_s$ back to $\Omega$. Thus $s$ can dramatically reduce the dimension of the partition space, which may improve regularization due to the small number of samples, without suffering large reconstruction loss. $s$ will be fixed rather than learned in this case. 
Once the partition has been constructed on $\Phi_s$, and we select a leaf region to propose from, we sample new $\vx$ from $\Omega$ as before.\footnote{Specifically, we initialize the inner solver (CMA-ES in our experiments) using the pre-existing samples corresponding to the selected leaf region in $\Phi_s$, and then propose new samples using that initialization.}

In principle, the sampling space can itself be a latent space $\Phi_h$, with an encoder $h$ and decoder $h^{-1}$. That is, one runs the inner solver in $\Phi_h$ to propose samples before decoding back to $\Omega$. $h$ could be a principal component analysis (PCA)~\cite{wold1987principal}, a random network encoding~\cite{ulyanov2018deep}, or a reversible flow~\cite{dinh2016density}, depending on the environment's particular $\Omega$ and state/action structure. While some latent representations can be fixed by specifying the inductive bias 
(e.g., random network encoding), others can be learned from data, optimizing reconstruction loss
    $\min_h \eee{\vx}{w(\vx)\|h^{-1}(h(\vx)) - \vx\|^2}$, 
where $w(\vx)$ is a weighting function emphasizing trajectories with high cumulative reward $f(\vx)$. In this case, $h$ and $h^{-1}$ may be fine-tuned using each new $(\vx, f(\vx))$ pair when \ours{} proposes and queries a new trajectory $\vx$, or they may be pre-trained using a set of unlabeled $\vx$ with $w(\vx) \equiv 1$. For consistency in our main experiments, we do not use a latent $\Phi_h$, although we observe that using this second latent space can yield a slight performance in some environments (Appendix \ref{sec:latent_space_ablations}).

\section{\ours{} on Synthetic Environments}\label{sec:synthetic_experiments}

We test \ours{} on a diverse set of environments to evaluate its performance in different settings. 


\textbf{Baselines}. We compare \ours{} to several baselines. \textbf{LaMCTS} is the original LaMCTS algorithm using CMA-ES as an inner solver, like \ours{}. \textbf{Random Shooting (RS)}~\cite{richards2005robust} samples random trajectories and returns the best one. \textbf{Cross-Entropy Methods (CEM)}~\cite{botev2013cross} use the top-$k$ samples to fit a local model to guide future sampling. A related approach, \textbf{Covariance matrix adaptation evolution strategy (CMA-ES)}~\cite{hansen2016cma}, tracks additional variables for improved local model fitting. \textbf{Voronoi optimistic optimization applied to trees (VOOT)}~\cite{voot} is a ``traditional'' MCTS method for continuous action spaces that builds a tree on actions at each timestep. \textbf{iLQR}~\cite{li2004iterative} is a seminal gradient-based local optimization approach used extensively in controls. Finally, \textbf{proximal policy optimization (PPO)}~\cite{schulman2017proximal} is a standard reinforcement learning algorithm. 

\ours{} does not require substantially more tuning effort than CEM or CMA-ES, the best-performing among our baselines experimentally. 
The only additional hyperparameter tuned in \ours{} is the $C_p$ controlling exploration when selecting regions to sample from, which is dependent on the scale of the reward function. However, our $C_p$ only varies by a factor of up to 10 across our diverse environments, and performance is not overly sensitive to small changes (Appendix \ref{sec:hyperparameter_sensitivity}).

We use MiniWorld~\cite{gym_miniworld} for continuous path planning and MiniGrid~\cite{gym_minigrid} for discrete. 

\subsection{MiniWorld}

\begin{figure*}[htbp]
    \centering
    \begin{subfigure}[t]{0.32\textwidth}
        \centering
        \includegraphics[height=1.1in]{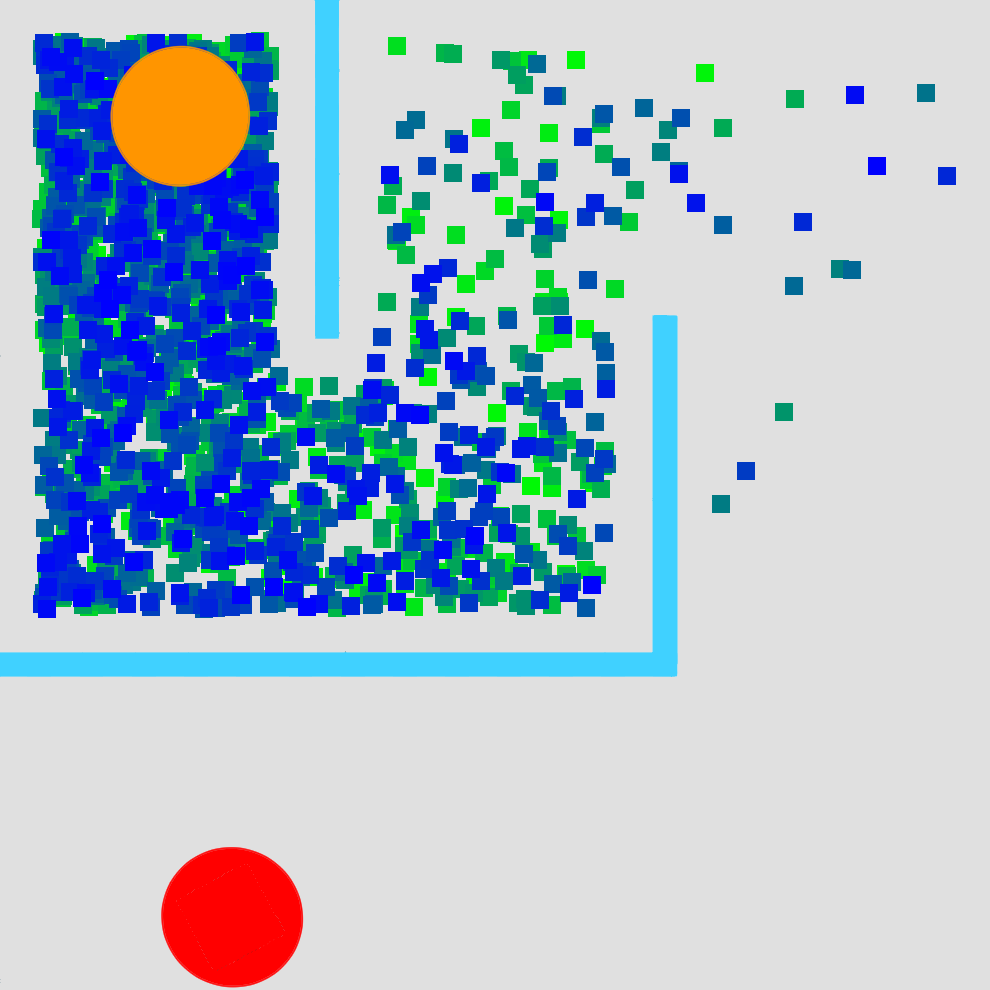}
        \vspace{-5pt}
        \caption{RS}
    \end{subfigure}%
    \hfill
    \begin{subfigure}[t]{0.32\textwidth}
        \centering
        \includegraphics[height=1.1in]{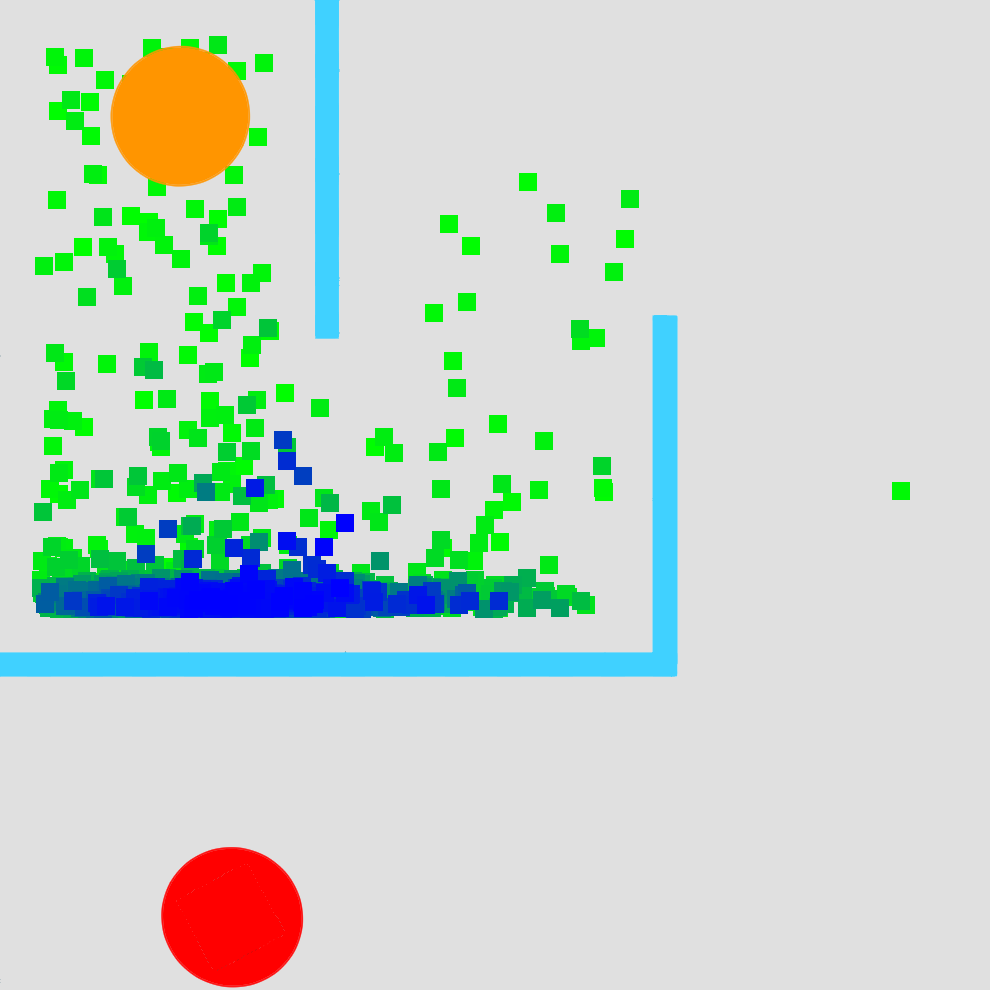}
        \vspace{-5pt}
        \caption{CEM}
    \end{subfigure}
    \hfill
    \begin{subfigure}[t]{0.32\textwidth}
        \centering
        \includegraphics[height=1.1in]{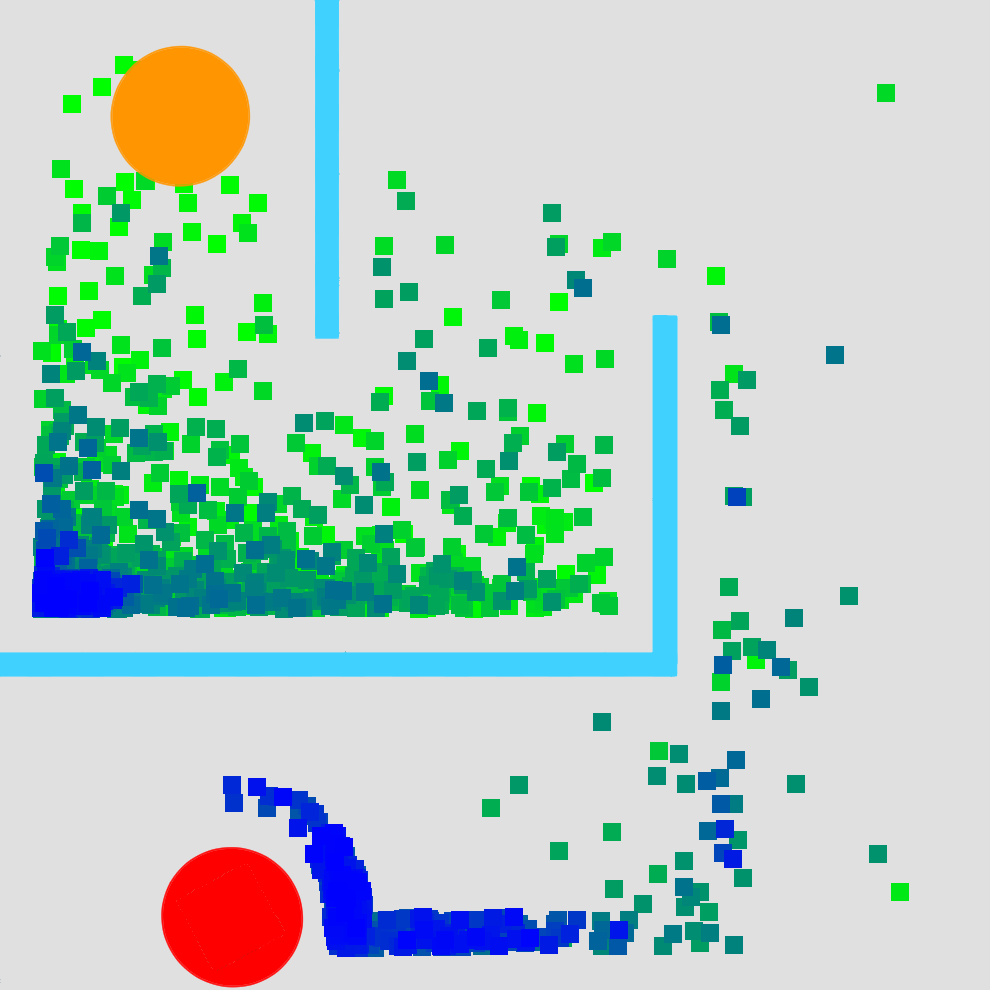}
        \vspace{-5pt}
        \caption{\ours{}}
    \end{subfigure}
    \vspace{-5pt}
    \caption{\small MazeS3 environment. \textbf{Start:} Orange circle. \textbf{Goal:} Red circle. Dots indicate final agent positions of 2,000 proposed trajectories (green: first iteration, blue: last iteration). CEM gets stuck in a local optimum of reward (shown as concentration of blue dots), while \ours{} succeeds in reaching the goal. 
    }
    \label{fig:miniworld_example}
\end{figure*}

We consider the following 2D navigation tasks in \emph{MiniWorld}.
\textbf{MazeS3}: Agent navigates in a 3 by 3 maze to a goal. Greedy path planning gets stuck in local optima (Figure \ref{fig:miniworld_example}).
\textbf{FourRooms}: Agent navigates from one room in a 2 by 2 configuration to a goal in the diagonally opposite room. Greedy path planning gets stuck in a corner. 
\textbf{SelectObj}: Open space with two goals. Large final reward when reaching the farther goal, while a distance-based reward misleadingly points to the closer goal. 
For full environment specifics, see Appendix \ref{sec:miniworld_details}. 

We modify the original setup to use a continuous action space ($\Delta x$ and $\Delta y$), and provide a sparse reward (proximity to goal, with an additional bonus for reaching the goal) at end-of-episode. We use a high-dimensional top-down image view as the state. We featurize this image using a randomly initialized convolutional neural network, a reasonable feature extractor as shown in \cite{ulyanov2018deep}. \ours{} uses periodic snapshots of the featurized state as the partition space $\Phi_s$. That is, we collect all the observed states over the course of the full trajectory, and then form the latent space by concatenating every $n^{th}$ state (here $n=20$), while discarding the rest to reduce overall dimensionality. Success is defined using a binary indicator for reaching the goal (far goal for SelectObj). 

\begin{figure*}[t!]
    \centering
    \begin{subfigure}[t]{0.32\textwidth}
        \centering
        \includegraphics[height=1.4in]{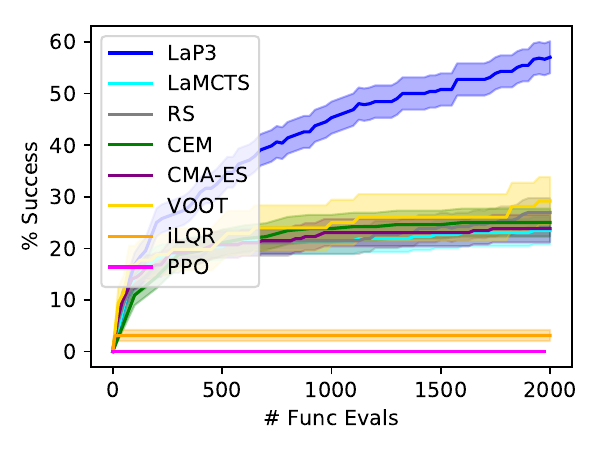}
        \vspace{-20pt}
        \caption{MazeS3}
    \end{subfigure}
    \hfill
    \begin{subfigure}[t]{0.32\textwidth}
        \centering
        \includegraphics[height=1.4in]{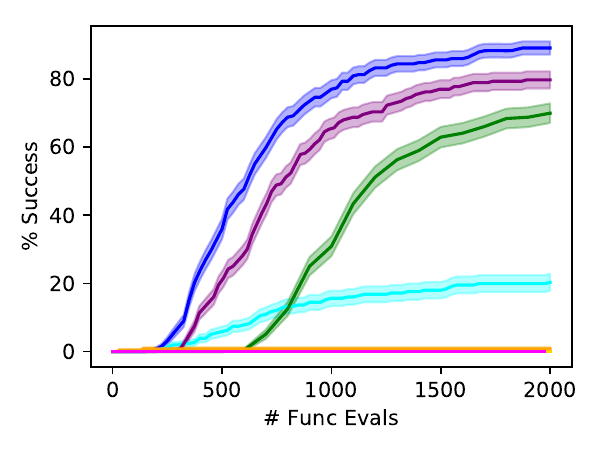}
        \vspace{-20pt}
        \caption{FourRooms}
    \end{subfigure}%
    \hfill
    \begin{subfigure}[t]{0.32\textwidth}
        \centering
        \includegraphics[height=1.4in]{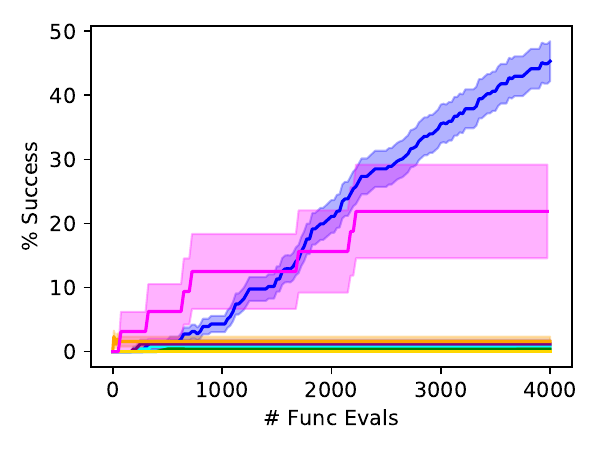}
        \vspace{-20pt}
        \caption{SelectObj}
    \end{subfigure}
    \vspace{-5pt}
    \caption{\small Mean, and standard deviation of mean (256 trials; fewer for VOOT and PPO due to speed), of success rate across MiniWorld tasks. \ours{} significantly outperforms all baselines on all three tasks.}
    \label{fig:miniworld_graph}
\end{figure*}

\begin{figure*}[t!]
    
    \centering
    \hfill
    \begin{subfigure}[t]{0.32\textwidth}
        \centering
        \includegraphics[height=1.4in]{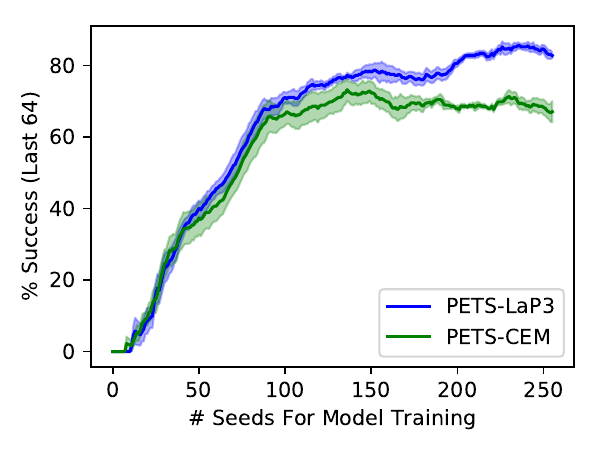}
        \vspace{-20pt}
        \caption{PETS FourRooms}
    \end{subfigure}%
    \hfill
    \begin{subfigure}[t]{0.32\textwidth}
        \centering
        \includegraphics[height=1.4in]{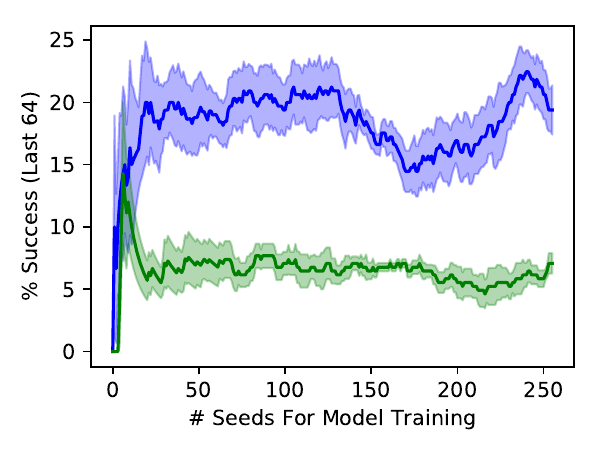}
        \vspace{-20pt}
        \caption{PETS SelectObj}
    \end{subfigure}
    \hfill
    \vspace{-5pt}
    \caption{\small \ours{} in PETS compared to original PETS planners on MiniWorld environments, using PETS-learned world models. Sliding length-64 window of success percentage against number of training seeds for world model. \ours{} significantly outperforms all baselines on both tasks. 
    }
    \label{fig:miniworld_pets_graph}
\end{figure*}

\textbf{Results}. \ours{} substantially outperforms all baselines on all three tasks, despite heavily tuning the  baselines' hyperparameters (Appendix \ref{sec:hyperparameter_tuning}), showing that \ours{} works for challenging tasks containing suboptimal local maxima.
In MazeS3, \ours{} succeeds but CEM gets stuck (Figure \ref{fig:miniworld_example}). VOOT, which builds an MCTS tree on actions at each timestep, struggles on all environments; \ours{} can be viewed as an extension of MCTS that performs better on such long-horizon tasks. PPO also performs poorly, perhaps due to the sparse reward given only at the end of an episode, and the relatively small (for RL) number of episodes. In the most difficult SelectObj task, \ours{} solves nearly half of environment seeds within 4,000 queries of the oracle, whereas most baselines---including the original LaMCTS---quickly reach the near goal but struggle to escape this local optimum. 

We also evaluate \ours{} when combined with a model-based approach, PETS~\cite{chua2018deep}, on FourRooms and SelectObj (omitting MazeS3 because the changing maze walls for each seed make it difficult to learn a world model). Following PETS' setting and due to difficulty in learning image-based world models \cite{hafner2019dream,hafner2020mastering}, we use 2D agent position as the state. As shown in Fig.~\ref{fig:miniworld_pets_graph}, \ours{} substantially outperforms the authors' original CEM implementation in the PETS framework, demonstrating that it is not reliant on access to the oracle model but can work with learned models as well.

\subsection{MiniGrid}
\emph{MiniGrid}~\cite{gym_minigrid} is a popular sparse-reward symbolic environment for benchmarking RL algorithms. It contains tasks with discrete states and actions such as \textbf{DoorKey~(DK)}: pick up a key and open the door connecting two rooms; \textbf{MultiRoom~(MR)}: traverse several rooms by opening doors; and \textbf{KeyCorridor~(KC)}, a combination of MR and DK: some doors are locked and require a key. As in MiniWorld, we add proximity to the goal to the final sparse reward.

In discrete action spaces, \ours{} optimizes the vector of all action probabilities over all timesteps, and takes the highest-probability action at each step. 
As in MiniWorld, we use periodic state snapshots featurized by a randomly initialized CNN as the partition space $\Phi_s$. 
We compare \ours{} to the same baselines as in MiniWorld, except VOOT and iLQR which are designed for continuous tasks.

\textbf{Results}. \ours{} is equal to or better than baselines on all six tasks (Table~\ref{tab:minigrid_results}). 
Especially in the hardest tasks with the most rooms (MR-N4S5, MR-N6), \ours{} improves substantially over baselines. 


\begin{table*}[!htb]
\caption{Results for \ours{} in MiniGrid. \ours{} is equal or better on all tasks (higher is better).}
\footnotesize
\setlength\tabcolsep{3.5pt}
\label{tab:minigrid_results}
\centering
\begin{tabular}{p{2cm}p{1.5cm}p{1.5cm}p{1.5cm}p{1.5cm}p{2cm}p{2cm}}
\toprule
& \textbf{DK-6} & \textbf{DK-8} & \textbf{KC-S3R3} & \textbf{KC-S3R4} & \textbf{MR-N4S5} & \textbf{MR-N6}\\ 
\midrule
LaMCTS & \textbf{0.96}$\pm$\textbf{0.02} & 0.09 $\pm$ 0.17 & -2.63$\pm$0.09 & -4.43$\pm$0.13 & -14.71$\pm$0.87 & -118.70$\pm$4.68 \\
RS & \textbf{0.97}$\pm$\textbf{0.01} & \textbf{0.34$\pm$0.13} & -2.38$\pm$0.09 & \textbf{-4.27$\pm$0.12} & -18.16$\pm$0.80 & -119.39$\pm$4.64 \\
CEM & 0.03$\pm$0.12 & -3.34$\pm$0.34 & -3.40$\pm$0.08 & -4.93$\pm$0.13 & -22.88$\pm$1.00 & -131.32$\pm$5.24\\
CMA-ES & 0.93$\pm$0.03 & 0.23$\pm$0.14 & -2.46$\pm$0.09 & -4.44$\pm$0.12 & -14.31$\pm$0.78 & \textbf{-117.50$\pm$4.61} \\
\ours{} & \textbf{0.95$\pm$0.03} & \textbf{0.46$\pm$0.13} & \textbf{-2.27$\pm$0.09} & \textbf{-4.37$\pm$0.13} & \textbf{-11.68$\pm$0.75} & \textbf{-113.53$\pm$4.49}  \\
\bottomrule 
\end{tabular}
\end{table*} 

\subsection{Analysis}\label{sec:analysis}
We run several ablations on \ours{} in MiniWorld to justify our methodological choices. See Appendix \ref{sec:detailed_analysis} for further analysis on hyperparameter sensitivity, UCB metric, and latent spaces.

\textbf{Region Selection in \ours}. We consider four alternative region selection methods. (1) \ours{}-$\mathrm{mean}$: using mean function value rather than max for UCB, as in LaMCTS \cite{wang2020learning}; \textbf{(2)} \ours{}-$\mathrm{nolatent}$: not using a latent space for partitioning; \textbf{(3)} \ours{}-$\mathrm{notree}$: directly selecting the leaf with the highest UCB score; and \textbf{(4)} \ours{}-$\mathrm{noUCB}$: only using node value rather than UCB. \ours{} greatly outperforms all variations in MiniWorld, justifying our design. 

\newfloatcommand{capbtabbox}{table}[][\FBwidth]

\begin{figure}[htbp]
\begin{floatrow}
\CenterFloatBoxes
\floatsetup{heightadjust=object,captionskip=0pt}
\ffigbox[\FBwidth][\FBheight][t]{

        \includegraphics[height=1.2in]{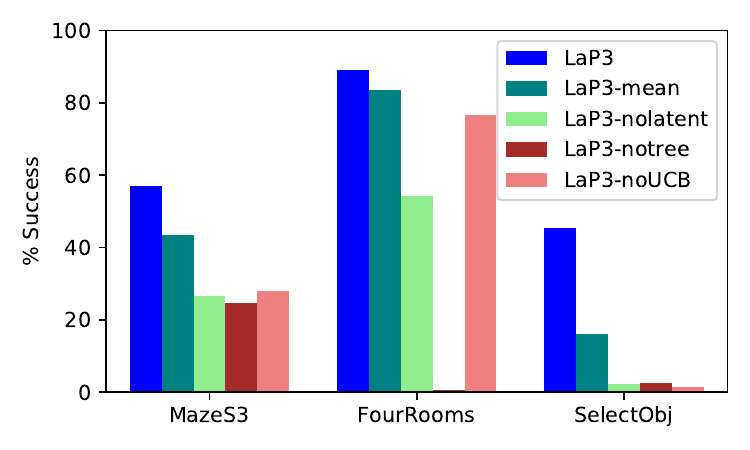}


}
{
\caption{\small MiniWorld success percentages with different region selection methods.}
\label{fig:region_select}
}

\capbtabbox{%
  \begin{tabular}{lccc}
            \toprule
& \textbf{MazeS3} & \textbf{FourRooms}  & \textbf{SelectObj }\\ 
\midrule  
$L_k$ & 87.5 & 100.0 & 100.0\\
$c_k$ & 81.3 & 93.8 & 100.0\\
\bottomrule 
\caption{\small Percentage out of 32 environment seeds on MiniWorld environments where \ours{} yields a better estimated Lipschitz and $c_k$ compared to random partitioning on the same nodes. }%
\label{tab:miniworld_lipschitz}

        \end{tabular}

}{%
  
}
\end{floatrow}
\end{figure}

\textbf{Data-driven space partition in \ours{} vs. random partitioning}.  We examine $c_k$ in Def.~\ref{def:diluted} and Lipschitz constant $L_k$ in Corollary~\ref{corollary:lipschitz} to verify the theory. We conduct a preliminary analysis on \ours{}'s tree after the full 2,000 queries (4,000 for SelectObj). At each intermediate node, we estimate $L_k$ and $c_k$ of its children from the \ours{} partition, against a random partition that divides the node's samples with the same ratio (see Appendix \ref{sec:lipschitz_estimation} for estimation details). We then average the values for both \ours{} and random partitions over all nodes in the tree. We find that \ours{} does yield lower average $L_k$ and $c_k$ (Table \ref{tab:miniworld_lipschitz}), indicating that our data-driven space partition is effective.

\section{\ours{} on Real-World Applications}\label{sec:real_world_experiments}

\subsection{Compiler Phase Ordering}

\begin{figure}[tb!]
    \centering
    \includegraphics[width=\textwidth]{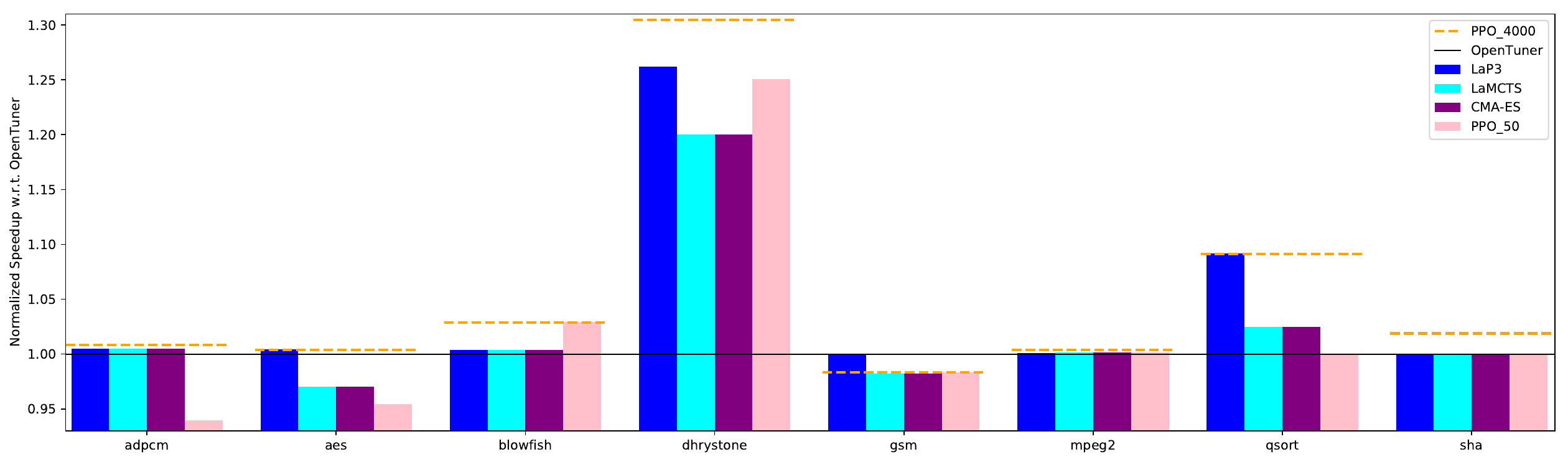}
    \vspace{-0.2in}
    \caption{\small Compiler phase ordering results, in terms of normalized execution cycles with respect to OpenTuner~\cite{ansel2014opentuner}, a widely used method for program autotuning. \ours{} is consistently equal or better compared to baselines. We omitted the $\mathrm{matmul}$ task since it doesn't fit the scale with its 245\% speedup over OpenTuner.
    } 
    \label{fig:autophase}
\end{figure}

Compiler optimization applies a series of program transformations from a set of predefined optimizations (e.g., \emph{loop invariant code motion}, \emph{function inlining}~\cite{muchnick1997advanced}) to improve code performance. Since these optimizations are not commutative, the order in which they are applied is extremely important. This problem, known as \textit{phase ordering}, is a core challenge in the compiler community.
Current solutions to this NP-hard problem rely heavily on heuristics: groups of optimizations are often packed into "optimization levels" (such as -O3 or -O0) hand-picked by developers~\cite{pan2006fast, triantafyllis2003compiler}.

We apply \ours{} to the standard CHStone benchmarks~\cite{hara2008chstone}, and use periodic snapshots of states as $\Phi_s$ and the identity as $\Phi_h$. 
See Appendix~\ref{sec:autophase_details} for full environment details.


\textbf{Results}. \ours{} is 31\% faster on average compared to OpenTuner, and 39\% compared to -O3 (not shown in figure). 
Compared to a stronger PPO baseline using 50 samples (PPO\_50) and to CMA-ES, we achieve up to $10\%$ and $7\%$ speedup respectively.
Finally, compared to final PPO results at convergence after 4000 samples (PPO\_4000) as an oracle, \ours{} does similarly on most tasks, despite being much more sample efficient (only 50 samples). Full results in Appendix~\ref{sec:numerical_tables}.

\subsection{Molecular Design}

\FloatBarrier

\begin{figure*}[htbp]
    \centering
    \begin{subfigure}[t]{0.24\textwidth}
        \centering
        \includegraphics[height=1.05in]{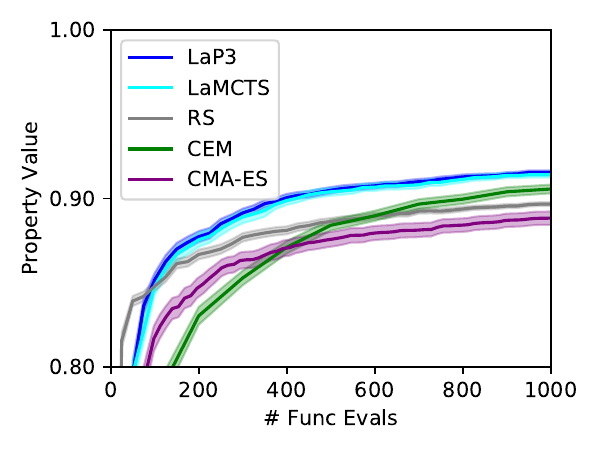}
        \vspace{-20pt}
        \caption{QED}
    \end{subfigure}%
    \hfill
    \begin{subfigure}[t]{0.24\textwidth}
        \centering
        \includegraphics[height=1.05in]{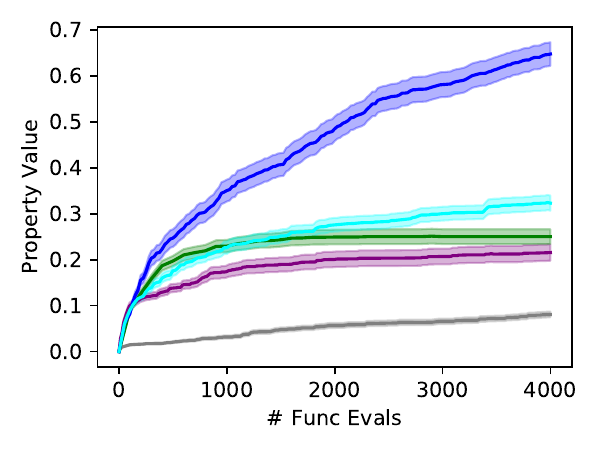}
        \vspace{-20pt}
        \caption{DRD2}
    \end{subfigure}
    \hfill
    \begin{subfigure}[t]{0.24\textwidth}
        \centering
        \includegraphics[height=1.05in]{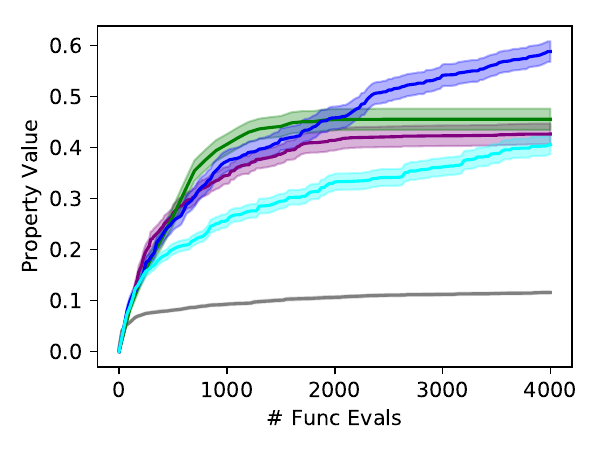}
        \vspace{-20pt}
        \caption{HIV}
    \end{subfigure}
    \hfill
    \begin{subfigure}[t]{0.24\textwidth}
        \centering
        \includegraphics[height=1.05in]{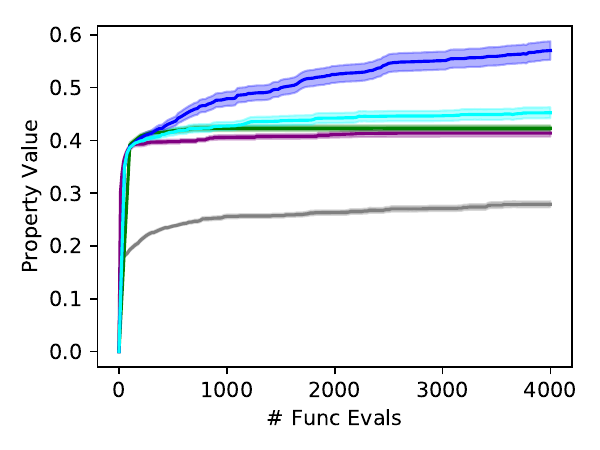}
        \vspace{-20pt}
        \caption{SARS}
    \end{subfigure}
    \vspace{-5pt}
    \caption{\small Mean and standard deviation (128 trials), of max property value discovered in molecular design tasks. \ours{} significantly outperforms all baselines on all properties. 
    }
    \label{fig:molecule_graph}
\end{figure*}

Finally, we evaluate \ours{} on molecular design. Given an oracle for a desired molecular property, the goal is to generate molecules with high property score after the fewest trials. This is critical to pharmaceutical drug development \cite{vamathevan2019applications}, as property evaluations require expensive wet-lab assays. 

Similar to \cite{jin2018junction}, we fix a query budget and optimize several properties: \textbf{QED}: a synthetic measure of drug-likeness, relatively simpler to optimize; \textbf{DRD2}: a measure of binding affinity to a human dopamine receptor; \textbf{HIV}, the probability of inhibition potential for HIV; and \textbf{SARS}: the same probability for a variant of the SARS virus, related to the SARS-CoV-2 virus responsible for COVID-19. All four properties have a range of $[0, 1]$; higher is better. For DRD2, HIV, and SARS, we evaluate using computational predictors from \cite{olivecrona2017molecular} (DRD2) and \cite{yang2019analyzing} (HIV, SARS) in lieu of wet-lab assays. 

To run \ours{} on molecular design, we view the molecular string representation (SMILES string \cite{weininger1988smiles}) as the action sequence, similar to how many generative models generate molecules autoregressively~\cite{gomez2018automatic,kusner2017grammar,dai2018syntax,yang2020improving}. Following the state-of-the-art HierG2G model from \cite{jin2020hierarchical}, we learn a latent representation from a subset of ChEMBL~\cite{chembl}, a dataset of 1.8 million drug-like molecules, \emph{without} using any of its property labels (e.g., effectiveness in binding to a particular receptor). During this unsupervised training, we only use the 500k molecules with the lowest property scores to ensure a good molecule is discovered by search rather than a simple retrieval from the dataset. Our setting differs from many existing methods for molecular design, which assume a large preexisting set of molecules with the desired property for training the generator \cite{olivecrona2017molecular,jin2018learning,you2018graph,yang2020improving}.

On this task only, the latent space is trained on additional unlabeled data, and is used as both the partition space $\Phi_s$ and sampling space $\Phi_h$ for \ours{}. All baselines operate in the same space for fair comparison. Otherwise, all methods struggle to generate well-formed molecules of reasonable length. 

\textbf{Results}.
Figure \ref{fig:molecule_graph} shows the highest property score discovered by each method for each property. 
The absolute difference is small in the relatively simple synthetic QED task. However, \ours{} outperforms all baselines by a much greater margin---up to 0.4 in DRD2---in the more challenging and realistic DRD2, HIV, and SARS tasks, where CEM and CMA-ES quickly plateau but \ours{} continues to improve with more function evaluations. 

\section{Conclusion}
We propose \ours{}, a novel meta-algorithm for path planning that learns to partition the search space so that subsequent sampling focuses more on promising regions. We provide a formal regret analysis of region partitioning, motivating improvements that yield large empirical gains. 
\ours{} particularly excels in environments with many difficult-to-escape local optima, substantially outperforming strong baselines on 2D navigation tasks as well as real-world compiler optimization and molecular design. 

\begin{ack}
We thank the members of the Berkeley NLP group as well as our four anonymous reviewers for their helpful feedback. This work was supported by Berkeley AI Research, and the NSF through a fellowship to the first author. 
\end{ack}

\bibliography{references}
\bibliographystyle{plain}

\clearpage

\section*{Checklist}

\begin{enumerate}

\item For all authors...
\begin{enumerate}
  \item Do the main claims made in the abstract and introduction accurately reflect the paper's contributions and scope?
    \answerYes{We claim to provide a theoretical explanation of region partitioning and empirical gains over baselines, which are presented in Sec. \ref{sec:theory} and Secs. \ref{sec:synthetic_experiments},\ref{sec:real_world_experiments} respectively. }
  \item Did you describe the limitations of your work?
    \answerYes{We have discussed limitations of our preliminary theory in Sec. \ref{sec:implications}. Our latent spaces also inherently depend on the details of the environments, as described in each individual experiment section. While \ours{} could be easily modified for black-box optimization in principle, we have made clear that we empirically verify only on path planning.}
  \item Did you discuss any potential negative societal impacts of your work?
    \answerNo{We do not foresee any obvious negative societal impacts from our work, which contributes a general-purpose path planning algorithm.}
  \item Have you read the ethics review guidelines and ensured that your paper conforms to them?
    \answerYes{}
\end{enumerate}

\item If you are including theoretical results...
\begin{enumerate}
  \item Did you state the full set of assumptions of all theoretical results?
    \answerYes{In lemma/theorem statements in Sec. \ref{sec:theory}.}
	\item Did you include complete proofs of all theoretical results?
    \answerYes{All proofs are in Appendix \ref{sec:proofs}.}
\end{enumerate}

\item If you ran experiments...
\begin{enumerate}
  \item Did you include the code, data, and instructions needed to reproduce the main experimental results (either in the supplemental material or as a URL)?
    \answerYes{We upload code in the supplementary material.}
  \item Did you specify all the training details (e.g., data splits, hyperparameters, how they were chosen)?
    \answerYes{We discuss all hyperparameter tuning details in Appendix \ref{sec:hyperparameter_tuning}.}
	\item Did you report error bars (e.g., with respect to the random seed after running experiments multiple times)?
    \answerYes{Included in all experiments in Secs. \ref{sec:synthetic_experiments},\ref{sec:real_world_experiments}.}
	\item Did you include the total amount of compute and the type of resources used (e.g., type of GPUs, internal cluster, or cloud provider)?
    \answerNo{}
\end{enumerate}

\item If you are using existing assets (e.g., code, data, models) or curating/releasing new assets...
\begin{enumerate}
  \item If your work uses existing assets, did you cite the creators?
    \answerYes{}
  \item Did you mention the license of the assets?
    \answerNo{}
  \item Did you include any new assets either in the supplemental material or as a URL?
    \answerNA{Just code, which is in supplemental material.}
  \item Did you discuss whether and how consent was obtained from people whose data you're using/curating?
    \answerNo{We use publicly available datasets/tasks.}
  \item Did you discuss whether the data you are using/curating contains personally identifiable information or offensive content?
    \answerNo{We don't use data of this sort.}
\end{enumerate}

\item If you used crowdsourcing or conducted research with human subjects...
\begin{enumerate}
  \item Did you include the full text of instructions given to participants and screenshots, if applicable?
    \answerNA{}
  \item Did you describe any potential participant risks, with links to Institutional Review Board (IRB) approvals, if applicable?
    \answerNA{}
  \item Did you include the estimated hourly wage paid to participants and the total amount spent on participant compensation?
    \answerNA{}
\end{enumerate}

\end{enumerate}

\clearpage

\appendix

\def\circle#1{{\small \textcircled{\raisebox{-0.9pt}{#1}}}}

\section{LaMCTS Partition Function}\label{sec:alg_partition}

Algorithm~\ref{alg:partition_func} details the pseudocode for the partition function used in LaMCTS, which we use in \ours{} as well. 

\begin{algorithm}[htbp]
    \caption{Partition Function}
    \label{alg:partition_func}
    \begin{algorithmic}[1]
    \State {\bfseries Input:} Input Space $\Omega$, Samples $\cS_t$, Node partition threshold $N_{\thres}$, Partitioning Latent Model $s(\vx)$
    \State Set $\cV_0 = \{ \Omega \}$
    \State Set $\cV_{queue} = \{ \Omega \}$
    \While{$\cV_{queue} \neq \emptyset$}
    \State $\Omega_p \gets \cV_{queue}.pop(0)$
    \If{$n(\Omega_p) \ge N_{\thres}$}
    \State $S_{good}, S_{bad} \gets$ samples from $S_t$ corresponding to indices of $k$-means$(\textcolor{darkgreen}{s(\Omega_p \cap S_t)})$
    \State Fit SVM on $S_{good}, S_{bad}$
    \State Use SVM to split $\Omega_p$ into $\Omega_{good}, \Omega_{bad}$
    \State $\cV_0 \gets \cV_0 \cup \{\Omega_{good}, \Omega_{bad}\}$
    \State $\cV_{queue} \gets \cV_{queue} \cup \{\Omega_{good}, \Omega_{bad}\}$
    \EndIf
    \EndWhile \\
    \Return $\cV_0$
    \end{algorithmic}
\end{algorithm}

\section{Proofs} \label{sec:proofs}
\subsection{Proof of Lemma~\ref{thm:max-bound}}
\begin{proof}
Let $\delta < 1$. Define the following cumulative density function (CDF):
\begin{equation}
    F_k(y) := \pr[f(\vx) \le g^*_k - y | \vx \in \Omega_k]
\end{equation}
where $g^*_k := \sup_{\vx\in \Omega_k}f(\vx)$. It is clear that $F_k(y)$ is a monotonically decreasing function with $F_k(0) = 1$ and $\lim_{y\rightarrow +\infty} F_k(y) = 0$. Here we assume it is strictly decreasing so that $F_k(y)$ has a well-defined inverse function $F_k^{-1}$. 

In the following, we will omit the subscript $k$ for brevity. Let us bound $\pr[g_t \ge g^* - y]$:
\begin{eqnarray}
    \pr[g_t \ge g^* - y] &=& 1 -\pr[g_t \le g^* - y] \\
    &\stackrel{\circle{1}}{=}& 1 - \prod_i \pr[f(\vx_i) \le g^* - y | \vx_i\in \Omega_k] \\
    &=& 1 - F_k^{n_t}(y) 
\end{eqnarray}
Note that $\circle{1}$ is due to the fact that all samples $\vx_1, \ldots, \vx_{n_t}$ are independently drawn within the region $\Omega_k$. Given $\delta$, let $r_t := F_k^{-1}(\delta^{1/n_t})$ and we have:
\begin{equation}
    \pr[g_t \ge g^* - r_t] = 1 - \delta 
\end{equation}
\end{proof}

\subsection{Proof of Corollary~\ref{corollary:lipschitz}}
\begin{proof}
Since $f$ is $L_k$-Lipschitz over region $\Omega_k$, we have:
\begin{equation}
    |f(\vx) - f(\vx')| \le L_k \|\vx-\vx'\|_2\quad \forall\vx,\vx'\in \Omega_k 
\end{equation}
Since the optimal solution $\vx^*_k \in \Omega_k$ is in the interior of $\Omega_k$, there exists $\epsilon_0$ so that $B(\vx_k^*, \epsilon_0) \subseteq \Omega_k$. From the Lipschitz condition, we know that in the ball $B(\vx^*_k, \epsilon)$ with $\epsilon \le \epsilon_0$, the function values are also quite good:
\begin{equation}
    f(\vx) \ge f(\vx^*_k) - L_k\|\vx-\vx^*_k\|_2 = g^* - L_k\epsilon, \quad\forall\vx \in B\left(\vx^*_k, \epsilon\right)
\end{equation}
Therefore, at least in the ball of $B(\vx_k^*, \epsilon)$, all function values are larger than a threshold $g^* - L_k\epsilon$. This means that for $\epsilon \le \epsilon_0$: 
\begin{equation}
    F_k(L_k\epsilon) = \pr\left[f(\vx) \le g^* - L_k\epsilon | \vx\in\Omega_k\right] \le 1 - \frac{V_0\epsilon^d}{V_k} 
\end{equation}
where $V_0$ is the volume of the unit $d$-dimensional sphere. Letting $\tilde V_k := V_k / V_0$ be the relative volume with respect to unit sphere, we have: 
\begin{equation}
    F_k(y) \le 1 - \frac{(y/L_k)^d}{\tilde V_k} = 1 - \left(\frac{y}{L_k\tilde V_k^{1/d}}\right)^d \quad \mathrm{when}\ y \le L_k\epsilon_0
\end{equation}
Therefore, $\Omega_k$ is at most $(1 - \epsilon^d_0\tilde V^{-1}_k, L_k\tilde V_k^{1/d})$-diluted with $z_k = 1 - \epsilon^d_0 \tilde V^{-1}_k$ and $c_k = L_k\tilde V_k^{1/d}$. 
\end{proof}

\subsection{New Lemma and Proof}
\begin{lemma}
\label{lemma:fbound}
If $\Omega_k$ are $(z_k, c_k)$-diluted, then for any $\delta \in \left[z_k, 1\right]$ and $j \ge 1$, we have: 
\begin{equation}
    F_k^{-1}(\delta^{1/j}) \le c_k\sqrt[d]{\frac{1}{j}\ln \frac{1}{\delta}} 
\end{equation}
\end{lemma}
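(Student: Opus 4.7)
My plan is to invert the dilution inequality from Definition~\ref{def:diluted} into an upper bound on $F_k^{-1}$, then apply the elementary inequality $1-t \le \ln(1/t)$ for $t \in (0,1]$ to convert a polynomial bound into the logarithmic form demanded by the lemma.

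The first step is to rewrite $(z_k,c_k)$-dilution in inverse form. Since $F_k$ is strictly decreasing, so is $F_k^{-1}$, and the inequality $F_k(y) \le 1 - (y/c_k)^d$ valid on $y \in [0, c_k(1-z_k)^{1/d}]$ translates, via the substitution $v = 1 - (y/c_k)^d$, to
\begin{equation}
    F_k^{-1}(v) \le c_k (1-v)^{1/d} \qquad \text{for } v \in [z_k, 1].
\end{equation}
This is just monotonicity of $F_k^{-1}$ applied to the dilution bound, together with a check that the range $y \in [0, c_k(1-z_k)^{1/d}]$ corresponds exactly to $v \in [z_k, 1]$.

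Next I would substitute $v = \delta^{1/j}$. For this to be legal I need $\delta^{1/j} \in [z_k, 1]$. Since $j \ge 1$ and $z_k \in [0,1]$, we have $z_k^{1/j} \ge z_k$, and together with $\delta \ge z_k$ this gives $\delta^{1/j} \ge z_k^{1/j} \ge z_k$; the upper bound $\delta^{1/j} \le 1$ is immediate. Plugging in yields
\begin{equation}
    F_k^{-1}(\delta^{1/j}) \le c_k (1 - \delta^{1/j})^{1/d}.
\end{equation}

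The final step is the log inequality: for any $t \in (0,1]$, $1 - t \le \ln(1/t)$ (this is just convexity of $-\ln$, or equivalently $e^{-x} \ge 1-x$). Applying this with $t = \delta^{1/j}$ gives $1 - \delta^{1/j} \le -\tfrac{1}{j}\ln \delta = \tfrac{1}{j}\ln(1/\delta)$, and raising both sides to the $1/d$ power (both are nonnegative) produces the claimed bound. There is no real obstacle here — the whole argument is bookkeeping, and the only thing to watch is ensuring $\delta^{1/j}$ lives in the regime where the dilution bound is stated, which is handled by the monotonicity observation above.
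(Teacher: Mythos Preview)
Your proof is correct and essentially identical to the paper's: both invert the dilution condition to obtain $F_k^{-1}(v)\le c_k(1-v)^{1/d}$ on $[z_k,1]$, verify that $\delta^{1/j}$ lies in this interval, and then apply the same elementary inequality (the paper writes it as $a^x \ge 1 + x\ln a$, you as $1-t \le \ln(1/t)$, but these are the same statement). The only cosmetic difference is your chain $\delta^{1/j}\ge z_k^{1/j}\ge z_k$ versus the paper's simpler $\delta^{1/j}\ge \delta \ge z_k$.
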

\begin{proof}
Note that the diluted condition $F_k(y) \le 1 - \left(\frac{y}{c_k}\right)^d$ for $y\in[0,c_k\sqrt[d]{1-z_k}]$ can be also be written as: 
\begin{equation}
F_k^{-1}(z) \le c_k\sqrt[d]{1-z},\quad \forall z \in [z_k, 1] \label{eq:finverse}
\end{equation}
Since now we have $z_k \le \delta \le \delta^{1/j} \le 1$ for any $j \ge 1$, following Eqn.~\ref{eq:finverse} we have: 
\begin{equation}
    F_k^{-1}(\delta^{1/j}) \le c_k\sqrt[d]{1 - \delta^{1/j}} 
\end{equation}
Due to the inequality that for $a < 1$ and $x > 0$, $a^x \ge 1 + x\ln a$ (which can be proven by simply showing the derivative is non-negative), if we take $a = \delta$ and $x = 1/j$, we have:
\begin{equation}
    \delta^{1/j} \ge 1 - \frac{1}{j}\ln\frac{1}{\delta}
\end{equation}
which gives:
\begin{equation}
    F_k^{-1}(\delta^{1/j}) \le c_k\sqrt[d]{j^{-1}\ln 1/\delta} 
\end{equation}
\end{proof}

\subsection{Proof of Theorem~\ref{thm:regret}}
\begin{proof}
Take $\delta = \eta / T^3$ so that $\delta \ge z_k$ for all regions $\Omega_k$. Then Eqn.~\ref{thm:max-bound} holds for all $T$ iterations and all $K$ arms with probability at least $1 - KT\delta$ by union bound, which we consider a ``good event''.

For brevity, define $g^*_k := g(\Omega_k)$ as the optimal function value within the region of $\Omega_k$ and $n_{k,t} := n_t(\Omega_k)$ the visitation count of region $\Omega_k$. Define $\Delta_k := f^* - g^*_k$ the minimal regret of each arm and $r_{k,t} := r_t(\Omega_k)$ the confidence bound. At iteration $t$, since we pick $k = a_t$ as the region to explore, it must be the case that:
\begin{equation}
    f^* + r_{k,t} \stackrel{\circle{1}}{\ge} g^*_k + r_{k,t} \stackrel{\circle{2}}{\ge} g_{k,t} + r_{k,t} \stackrel{\circle{3}}{\ge} g_{k^*,t} + r_{k^*,t} \stackrel{\circle{4}}{\ge} g^*_{k^*} = f^*
\end{equation}
where $k^*$ is the index of the optimal region $\Omega_{k^*}$ where its maximum $g^*_{k^*}$ is the global optimal value $f^*$. Here \circle{1} is due to global optimality of $f^*$, \circle{2} is due to global optimality of $g^*_k$ within region $\Omega_k$: $g^*_k \ge g_{k,t}$, \circle{3} is due to the fact that we pick $a_t=k$ at iteration $t$, and \circle{4} is due to the non-negativity of the confidence bound: $r_{k^*,t} \ge 0$. Therefore, since $g^*_k + r_{k,t} \ge f^*$, we have:
\begin{equation}
    \Delta_k := f^* - g_k^* \le r_{k,t} 
\end{equation}
Now we bound the total regret. 

Note that for $R_t(a_t) := f^* - g_{a_t,t}$, we have:
\begin{equation}
    R_t(a_t) := f^* - g_{a_t,t} = f^* - g^*_{a_t} + g^*_{a_t} - g_{a_t,t} \le 2r_{a_t, t}
\end{equation}
due to the fact that $\Delta_{k} = f^* - g^*_{k} \le r_{k,t}$ and the property of the confidence bound that $g_{k,t} \ge g^*_{k} - r_{k,t}$ with $k = a_t$. 

One the other hand, using Lemma~\ref{lemma:fbound}, we also have
\begin{equation}
    \Delta_k \le r_{k,t} = F_k^{-1}(\delta^{1/n_{k,t}}) \le c_k\sqrt[d]{\frac{1}{n_{k,t}}\ln \frac{1}{\delta}} 
\end{equation}
which means that 
\begin{equation}
    n_{k,t} \le \left(\frac{c_k}{\Delta_k}\right)^d\ln \frac{1}{\delta}
\end{equation}
So if the region $\Omega_k$ has a large gap $\Delta_k$, then $n_{k,t}$ would have a small upper-bound (and be small). As a result, we would never visit that region after a fixed number of visitations. This also helps bound the regret. 

If we sum over $R_t(a_t)$ over $t$ iterations, we get $R(T)$. We could reorganize them into two kinds of regions, the good regions where $\kgood := \{k: \Delta_k \le \Delta_0\}$ and the bad regions where $\kbad := \{k: \Delta_k > \Delta_0\}$:
\begin{equation}
    R(T) = \sum_{t=1}^T R_t(a_t) = \underbrace{\sum_{a_t \in \kgood} R_t(a_t)}_{R_\good(T)} + \underbrace{\sum_{a_t \in \kbad} R_t(a_t)}_{R_\bad(T)}  
\end{equation}
Let $M := \sup_{\vx\in\Omega} f(\vx) - \inf_{\vx\in\Omega} f(\vx)$ be the maximal gap between the highest and lowest function values. Note that $M$ is also the largest regret for a single move at any iteration. Letting $C_\bad := \left(\sum_{k\in \kbad} c_k^d\right)^{1/d}$ be the $\ell_d$-norm of $c_k$ over bad regions, we then have:  
\begin{eqnarray}
    R_\bad(T) &\le & M\left(\frac{C_\bad}{\Delta_0}\right)^d \ln\frac{1}{\delta} \\
    R_\good(T) &= & 2\sum_{k\in \kgood} \sum_{j=1}^{n_{k,T}} r_{k, t} \Big|_{n_{k,t} = j} = 2\sum_{k\in\kgood} \sum_{j=1}^{n_{k,T}} F_k^{-1}(\delta^{1/j})
\end{eqnarray}
For $R_\good(T)$, this is because for each region $k$ we visit it $n_{k,T}$ times and each time we pay a price that is proportional to $1 / n_{k, t}$ for $n_{k,t} = 1\ldots n_{k,T}$. 

Using Lemma~\ref{lemma:fbound}, since all $\Omega_k$ are $(z_k, c_k)$-concentrated and $z_k\le \delta$, this leads to:
\begin{equation}
    R_\good(T) \le 2\sqrt[d]{\ln 1/ \delta} \sum_{k\in \kgood} c_k \sum_{j=1}^{n_{k,T}} j^{-1/d} 
\end{equation}
Assuming $d > 1$ (high-dimensional case), we use the bound
\begin{equation}
\sum_{j=1}^n j^{-1/d} \le \frac{d}{d-1}n^{1-1/d}
\end{equation}
and we have:
\begin{equation}
     R_\good(T) \le \frac{2d}{d-1}\sqrt[d]{\ln 1/\delta} \sum_{k\in\kgood}  c_k n_{k,T}^{\frac{d-1}{d}} 
\end{equation}
H\"older's inequality says if $1/p+1/q=1$, then $\sum_k |x_k y_k| \le (\sum_k |x_k|^p)^{1/p} (\sum_k |y_k|^q)^{1/q}$. Using it with $p = d$ and $q = \frac{d}{d-1}$, we get
\begin{eqnarray}
     R_\good(T) &\le& \frac{2 d}{d-1}\sqrt[d]{\ln \frac{1}{\delta}} \left(\sum_{k\in\kgood} c_k^d\right)^{\frac{1}{d}}\left(\sum_{k\in\kgood} n_{k,T}\right)^{\frac{d-1}{d}} \\
     &\le& \frac{2 d}{d-1}\sqrt[d]{\ln \frac{1}{\delta}} C_\good T^{\frac{d-1}{d}} 
\end{eqnarray}
where $C_\good := \left(\sum_{k\in \kgood } c_k^d\right)^{\frac{1}{d}}$ is the $\ell_d$-norm of $c_k$ over good regions. 

Finally, if a good event doesn't happen (with probability $KT\delta$), we would pay a regret of at most $M$ at each iteration $t$, yield a bound of $M KT^2\delta$ for $T$ iterations.

Since $\delta = \eta / T^3$ then finally we have 
\begin{equation}
    \ee{R(T)} = \cO\left[C_\good \sqrt[d]{T^{d-1}\ln T} + M \left(\frac{C_\bad}{\Delta_0}\right)^d \ln T + KM\eta /T\right]
\end{equation}
\end{proof}

\subsection{Additional Implications of Theorem 1}\label{sec:theorem_remarks}
\textbf{Relationship w.r.t sample complexity.} Note that one can turn the regret bound of $R(T)$ in Theorem~\ref{thm:regret} into sample complexity: if there exists $T$ such that $\ee{R(T)}/T \le \epsilon$, then with high probability there exists at least one $R_t(a_t) := f^* - g_t(\Omega_{a_t}) \le \epsilon$, showing that we already found a good $\vx \in \Omega_{a_t}$ with $f(\vx) = g_t(\Omega_{a_t}) \ge f^* - \epsilon$. To achieve this, since $R(T) \sim T^{\frac{d-1}{d}}$, we set $R(T)/T \sim T^{-\frac{1}{d}}$. Then the sample complexity $T$ to achieve the \emph{global optimum} within an $\epsilon$-ball is $\sim 1/\epsilon^d$, which is the best we can achieve without structured information on $f$. Previous papers~\cite{slivkins2019introduction} show a slightly worse bound $\cO(T^{\frac{d+1}{d+2}})$ since they also consider stochastic functions and discretization error.

\textbf{Which region to split?} Since $C_\good := \left(\sum_{k\in \kgood} c_k \right)^{1/d}$ is an $\ell_d$-norm, when $d$ is large (i.e., high-dimensional), $C_\good \sim \max_{k\in \Omega_\good} c_k$ so ideally we should split the region with the highest $c_k$ to reduce $C_\good$ the most. Intuitively this means the most diluted / scattered region.  



\section{Model-Based Reinforcement Learning}
\ours{} can escape local minima and achieve significantly better results in various RL tasks using a simulated environment. In MiniWorld, we showed that we could also plug \ours{} into the PETS~\cite{chua2018deep} framework, replacing the CEM method that was originally used as a planner (Sec. \ref{sec:synthetic_experiments}).
Here we additionally use Mujoco, a commonly used benchmark, to validate the performance. Note that Mujoco is a very smooth task and doesn't contain many local minima, so traditional methods work reasonably well in this domain. In Tab.~\ref{tab:Mujoco_results2}, we can see that in easier tasks like Reacher and Pusher, \ours{} is a little worse than CEM. However, in hard tasks like Halfcheetah and Walker, \ours{} has over 1000 reward gain over baseline methods.

\begin{table*}[!htb]
\caption{Results for Mujoco with replanning frequency of 5. We see that \ours{} performs substantially better than CEM and RS in hard tasks like Halfcheetah and Walker.}
\footnotesize
\setlength\tabcolsep{3.5pt}
\label{tab:Mujoco_results2}
\centering
\begin{tabular}{@{}p{1.75cm}p{2.1cm}p{1.75cm}p{2.5cm}p{2cm}p{2.4cm}@{}}
\toprule
& \textbf{swimmer} & \textbf{acrobot} & \textbf{hopper} & \textbf{pendulum} & \textbf{halfcheetah} \\ 
\midrule  
PETS(RS) & \textbf{12.92}$\pm$\textbf{7.92} & -41.93$\pm$2.17 & -1525.39$\pm$222.43 & 130.14$\pm$28.39 & 497.03$\pm$121.72  \\ 
PETS(CEM) & -6.87$\pm$1.30  & -24.77$\pm$7.63 & -2102.57$\pm$136.35 & 153.05$\pm$12.00 & 271.01$\pm$165.08\\
\ours{} & 10.82$\pm$8.47 & \textbf{6.29}$\pm$\textbf{10.29} & \textbf{-1205.01}$\pm$\textbf{167.52} & \textbf{153.70}$\pm$\textbf{38.02} & \textbf{3942.47}$\pm$\textbf{400.01} \\
\bottomrule 
\end{tabular}
\centering
\begin{tabular}{@{}p{1.75cm}p{2.1cm}p{1.75cm}p{2.5cm}p{2cm}p{2.4cm}@{}}
\toprule
& \textbf{reacher} & \textbf{pusher} & \textbf{ant} & \textbf{I-pendulum} & \textbf{walker} \\ 
\midrule  
PETS(RS) & -1165.59$\pm$12.04 & -220.58$\pm$2.94 & 1330.81$\pm$113.17 & -11.87$\pm$10.43 & -1204.94$\pm$344.70 \\
PETS(CEM) & \textbf{-36.45}$\pm$\textbf{2.87} &  \textbf{-90.70}$\pm$\textbf{7.29} & \textbf{1405.56}$\pm$\textbf{46.94} & -4.84$\pm$5.29 & -2036.28$\pm$213.41 \\
\ours{} & -40.31$\pm$5.05 & -103.42$\pm$2.91 & 1033.46$\pm$148.87 & \textbf{-0.30}$\pm$\textbf{0.09} & \textbf{-53.25}$\pm$\textbf{987.53}\\
\bottomrule 
\end{tabular}
\end{table*}

\section{Evaluation on Synthetic Functions}

We additionally evaluate \ours{} on some synthetic functions (Ackley and Levy functions, both 20-dimensional and 100-dimensional) used in the original LaMCTS paper \cite{wang2020learning}. For these tasks we compare to just the original LaMCTS method. Both \ours{} and LaMCTS use the TuRBO inner solver following \cite{wang2020learning} for the 20-dimensional version of both functions, and the CMA-ES inner solver for the 100-dimensional version for computational efficiency. \ours{} performs equal or better on these tasks (Table \ref{tab:synthetic_functions}; note lower is better).

\begin{table*}[!htb]

\footnotesize
\setlength\tabcolsep{3.5pt}
\centering
\begin{tabular}
{lcccc}
\toprule
& \textbf{Ackley-20D} &\textbf{Levy-20D}  & \textbf{Ackley-100D}& \textbf{Levy-100D} \\ 
\midrule  
LaMCTS & 0.48 $\pm$ 0.03&	0.51 $\pm$ 0.09&	0.65 $\pm$ 0.25&	14.24 $\pm$ 4.87 \\
\ours{} & 0.49 $\pm$ 0.04&	0.34 $\pm$ 0.07&	0.46 $\pm$ 0.15&	11.95 $\pm$ 3.56\\
\bottomrule 
\end{tabular}
\caption{\small \ours{} vs the original LaMCTS method on some synthetic functions evaluated in the original LaMCTS work. Note \textit{lower} is better. \ours{} performs equal or better on these tasks. 
\label{tab:synthetic_functions}
}
\end{table*}

\section{Tables of Numerical Results}\label{sec:numerical_tables}

We provide in Tables~\ref{tab:miniworld_results_oracle} through~\ref{tab:molecule} the numerical final rewards for our tasks, corresponding to the plots in the main text. 


\begin{table*}[!htb]

\footnotesize
\setlength\tabcolsep{3.5pt}
\centering
\begin{tabular}{p{2cm}p{2cm}p{2cm}p{2.5cm}p{2cm}p{2cm}p{2cm}}
\toprule
& \textbf{MazeS3} & \textbf{FourRooms}  & \textbf{SelectObj} \\ 
\midrule  
LaMCTS & 23.4 $\pm$ 2.6	& 20.3 $\pm$ 2.5 & \phantom{0}0.8 $\pm$ 0.6 \\
RS & \phantom{0}0.4 $\pm$ 0.4& \phantom{0}0.0 $\pm$ 0.0  & \phantom{0}3.1 $\pm$ 1.1 \\
CMA-ES & 23.8 $\pm$ 2.7& 79.7 $\pm$ 2.5  & \phantom{0}1.2 $\pm$ 0.7   \\
CEM & 25.0 $\pm$ 2.7 & 69.9 $\pm$ 2.9  & \phantom{0}0.4 $\pm$ 0.4  \\
VOOT & 26.2 $\pm$ 2.7  &\phantom{0}0.0 $\pm$ 0.0 & \phantom{0}0.0 $\pm$ 0.0 \\
RandDOOT & 25.0 $\pm$ 2.7  &\phantom{0}0.0 $\pm$ 0.0 & \phantom{0}0.0 $\pm$ 0.0 \\
iLQR & \phantom{0}3.1 $\pm$ 1.1 & \phantom{0}0.8 $\pm$ 0.6  & \phantom{0}1.6 $\pm$ 0.8\\
PPO & \phantom{0}0.0 $\pm$ 0.0 & \phantom{0}0.0 $\pm$ 0.0  & 31.3 $\pm$ 8.2\\
\ours{} & \textbf{57.0 $\pm$ 3.1} & \textbf{89.1 $\pm$ 2.0}  & \textbf{45.3 $\pm$ 3.1} \\
\bottomrule 
\end{tabular}
\caption{\small Results (success percentage over 256 trials) for MiniWorld tasks for different methods, querying oracle transition model. \ours{} substantially outperforms all baselines on all three environments. This table corresponds to Figure \ref{fig:miniworld_graph}.
\label{tab:miniworld_results_oracle}
}
\end{table*}

\begin{table*}[!htb]
\RawFloats
\footnotesize
\setlength\tabcolsep{3.5pt}
\label{tab:miniworld_results_pets}
\centering
\begin{tabular}{p{2cm}p{2cm}p{2cm}p{2.5cm}p{2cm}p{2cm}p{2cm}}
\toprule
& \textbf{FourRooms} & \textbf{SelectObj} \\ 
\midrule
PETS-RS & \phantom{0}0.0 $\pm$ 0.0 & \phantom{0}0.0 $\pm$ 0.0 \\
PETS-CEM & 66.9 $\pm$ 6.3 & \phantom{0}7.2 $\pm$ 1.9  \\
PETS-\ours{} & \textbf{83.1 $\pm$ 2.3} & \textbf{19.4 $\pm$ 1.8}  \\
\bottomrule 
\end{tabular}
\caption{\small Results for MiniWorld tasks for different methods using a learned PETS transition model. The oracle model is only used for final evaluation on each environment seed, and the resulting trajectory becomes future training data to the PETS model. We report the total fraction of environment seeds solved by each method, out of 256 total, averaged across 5 trials. \ours{} substantially outperforms the original PETS implementation. This table corresponds to Figure \ref{fig:miniworld_pets_graph}. 
}
\end{table*}


\begin{table*}[!htb]
\footnotesize
\setlength\tabcolsep{3.5pt}
\centering
\begin{tabular}{p{3cm}p{2cm}p{2cm}p{2.5cm}p{2cm}p{2cm}p{2cm}}
\toprule
 & \textbf{MazeS3} & \textbf{FourRooms} & \textbf{SelectObj} \\ 
\midrule  
\ours{} &  \textbf{57.0 $\pm$ 3.1} &\textbf{89.1 $\pm$ 2.0} & \textbf{45.3 $\pm$ 3.1} \\
\ours{}-$\mathrm{mean}$ &  43.4 $\pm$ 3.1 &83.6 $\pm$ 2.3 & 16.0 $\pm$ 2.3 \\
\ours{}-$\mathrm{nolatent}$ & 26.6 $\pm$ 2.8&	54.3 $\pm$ 3.1&	\phantom{0}2.3 $\pm$ 0.9 \\
\ours{}-$\mathrm{notree}$  & 24.6 $\pm$ 2.7 & \phantom{0}0.0 $\pm$ 0.0& \phantom{0}2.7 $\pm$ 1.0 \\
\ours{}-$\mathrm{noUCB}$ & 28.1 $\pm$ 2.8  & 76.6 $\pm$ 2.6& \phantom{0}1.6 $\pm$ 0.8 \\
\bottomrule 
\end{tabular}
\caption{\small Results for MiniWorld tasks using different region selection methods. 256 trials per method. This corresponds to Figure \ref{fig:region_select}.
}
\label{tab:leaf_selection}
\end{table*}

\begin{table*}[!htb]

\footnotesize
\setlength\tabcolsep{3.5pt}
\centering
\begin{tabular}{p{1.7cm}p{1cm}p{1cm}p{1.3cm}p{1.3cm}p{1cm}p{1cm}p{1cm}p{1cm}p{1cm}}
\toprule
& \textbf{adpcm} & \textbf{aes} & \textbf{blowfish} & \textbf{dhrystone} & \textbf{gsm} & \textbf{matmul} & \textbf{ mpeg2} & \textbf{qsort} & \textbf{sha} \\ 
\midrule  
-O0 & 41260 & 12633 & 199345 & 9258 & 8130 & 42085 & 10489 & 58400 & 269653 \\
-O3 & 16844 & 9937 & 188237 & 5936 & 7137 & 33244 & 8266 & 52256 & 226235 \\
PPO\_50 & 11175 & 10263 & \textbf{175649} & 5753 & 6286 & \textbf{9644} & 8281 & 52137 & 209142 \\
OpenTuner & 10501 & 9795 & 180834 & 7196 & 6181 & 33244 & 8291 & 52137 & 209155 \\
CMA-ES & \textbf{10451} & 10093 & 180198 & 5996 & 6294 & \textbf{9644} & 8280 & 50869 & 209142\\
\ours{} & \textbf{10451} & \textbf{9753} & 180179 & 5702 & \textbf{6178} & \textbf{9644} & 8282 & \textbf{47745} & 209142 \\
\hline 
PPO\_4000 & 10415 & 9759 & 175779 & \textbf{5515} & 6286 & \textbf{9644} & \textbf{8260} & 47785 & \textbf{205302} \\
\bottomrule 
\end{tabular}
\caption{\small Results for compiler phase ordering, in execution cycles for program after a series of transformations, following setup of \cite{haj2020autophase}. 50 oracle accesses per method. This table corresponds to Figure \ref{fig:autophase}. 
}
\label{tab:phase_order}
\end{table*}

\FloatBarrier


\begin{table*}[htbp]
\footnotesize
\setlength\tabcolsep{3.5pt}
\centering
\begin{tabular}
{lcccc}
\toprule
& \textbf{QED} & \textbf{DRD2} & \textbf{HIV} & \textbf{SARS}\\ 
\midrule  
LaMCTS & 0.914 $\pm$ 0.002&	0.323 $\pm$ 0.016&	0.406 $\pm$ 0.019&	0.452 $\pm$ 0.010\\
RS & 0.897 $\pm$ 0.001 & 0.081 $\pm$ 0.006 & 0.116 $\pm$ 0.002 & 0.279 $\pm$ 0.006 \\
CEM & 0.906 $\pm$ 0.003 & 0.250 $\pm$ 0.016 & 0.455 $\pm$ 0.021 & 0.423 $\pm$ 0.005 \\
CMA-ES & 0.888 $\pm$ 0.004 & 0.216 $\pm$ 0.018 & 0.425 $\pm$ 0.020 & 0.414 $\pm$ 0.007 \\
\ours{} & \textbf{0.916 $\pm$ 0.001} & \textbf{0.648 $\pm$ 0.026} & \textbf{0.588 $\pm$ 0.020} & \textbf{0.570 $\pm$ 0.018} \\
\bottomrule 
\end{tabular}
\caption{\small Mean and standard deviation across 128 random seeds for \ours{} and baselines on QED, DRD2, SARS, and HIV molecular design tasks; results reported for 1000, 4000, 4000, and 4000 oracle queries respectively. \ours{} significantly outperforms all three baselines on all four properties. This table corresponds to Figure \ref{fig:molecule_graph}.}
\label{tab:molecule}
\end{table*}

\section{Detailed Analyses and Ablations}\label{sec:detailed_analysis}

\subsection{$L_k$ and $c_k$ Estimation Details}\label{sec:lipschitz_estimation}

To loosely approximate the Lipschitz constant in our analysis from Sec. \ref{sec:analysis}, we simply check all pairwise Lipschitz constants between existing samples (candidate trajectories) in the tree node (region). Similarly, to loosely approximate $c_k$, we take the highest-scoring sample in the region as the ``optimum'' and estimate $c_k$ for $z_k=0.5$ following our definition using the remaining samples in the region. 

\subsection{$z_k$ Estimation}\label{sec:empirical_zk}

We estimate $z_k$ over time in our MiniWorld tasks, fixing several different values of $c_k$ in intervals of 1 reward (Figure \ref{fig:zk}). $z_k$ is estimated using 50 samples (in between each dynamic re-partitioning of the space) at each timestep in intervals of 50, with 32 trial runs. In all cases $z_k$ initially drops very quickly, and then somewhat plateaus after finding the initial local optimum (whether global or not), especially in SelectObj. However, in most cases it still continues to decrease over time. 

While this is consistent with our qualitative analysis in Sec.~\ref{sec:theorem_remarks} about how $z_k$ changes with recursive splitting, in some cases, $z_k$ seems to stop decreasing over time. Upon inspection, we find that those regions whose $z_k$ remain high correspond to low-performing regions which do not receive many samples according to UCB exploration. Therefore, such regions won't improve over time and the $z_k$ remains high. 



\begin{figure*}[htbp]
    \centering
    \begin{subfigure}[t]{0.32\textwidth}
        \centering
        \includegraphics[height=1.4in]{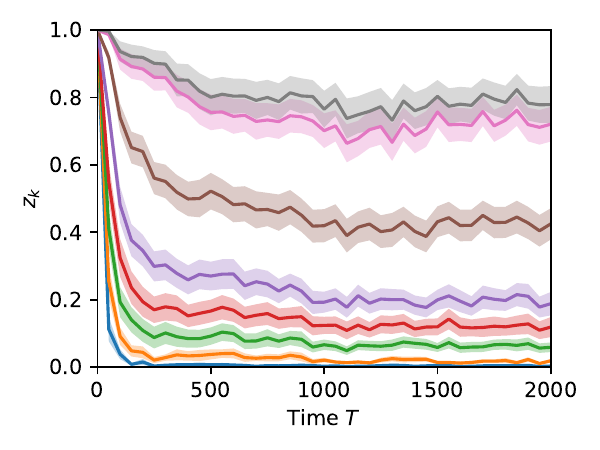}
        \vspace{-20pt}
        \caption{MazeS3}
    \end{subfigure}%
    \hfill
    \begin{subfigure}[t]{0.32\textwidth}
        \centering
        \includegraphics[height=1.4in]{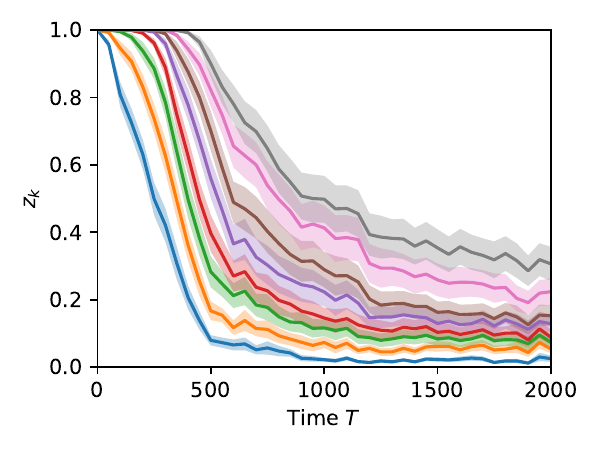}
        \vspace{-20pt}
        \caption{FourRooms}
    \end{subfigure}
    \hfill
    \begin{subfigure}[t]{0.32\textwidth}
        \centering
        \includegraphics[height=1.4in]{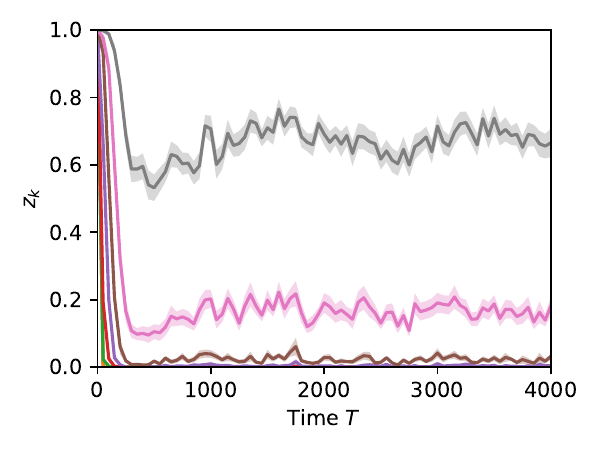}
        \vspace{-20pt}
        \caption{SelectObj}
    \end{subfigure}
    \vspace{-5pt}
    \caption{\small Mean and standard deviation (32 trials), of estimated $z_k$ for different values of $c_k$ on MiniWorld tasks.
    }
    \label{fig:zk}
\end{figure*}

\subsection{Latent Space Visualization}

We show a t-SNE visualization (Figure \ref{fig:tsne_mazes3}) of the latent space of trajectories at the end of a sample MazeS3 run of \ours{}. The first sampled trajectories are colored red, with a gradient toward blue for the later-sampled trajectories. The later trajectories are clearly separated in the latent space. 

\begin{figure*}[htbp]
    \centering
    \includegraphics[height=2in]{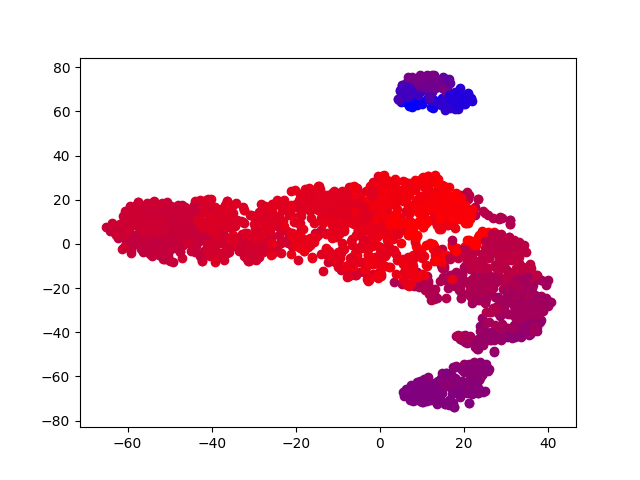}
    \caption{\small Latent space t-SNE visualization for a sample MazeS3 run of \ours{}. Earlier-sampled trajectories (red) are clearly separated from the latest-sampled trajectories (blue). 
    }
    \label{fig:tsne_mazes3}
\end{figure*}

\subsection{Parameter Space Methods}
It is of course possible to optimize the parameters of a policy which outputs an action given the current state, as in the original LaMCTS formulation, or in PPO. Nevertheless, we tune and run a parameter-space version of LaMCTS in the MiniWorld tasks, which is essentially the original LaMCTS adapted to path planning, only with TuRBO replaced with CMA-ES as in \ours{} due to speed considerations. Specifically, following LaMCTS, we learn the parameters of a linear policy for outputting actions given states. 

\begin{table*}[!htb]
\footnotesize
\setlength\tabcolsep{3.5pt}
\centering
\begin{tabular}{p{3cm}p{2cm}p{2cm}p{2.5cm}p{2cm}p{2cm}p{2cm}}
\toprule
& \textbf{MazeS3}& \textbf{FourRooms}  & \textbf{SelectObj} \\ 
\midrule  
LaMCTS-parameter & 10.9 $\pm$ 1.9 & \phantom{0}9.4 $\pm$ 1.8  & 100.0 $\pm$ 0.0 \\
PPO & \phantom{0}0.0 $\pm$ 0.0 & \phantom{0}0.0 $\pm$ 0.0   & 31.3 $\pm$ 8.2 \\
\ours{} & 57.0 $\pm$ 3.1 & 89.1 $\pm$ 2.0  & \phantom{0}45.3 $\pm$ 3.1 \\
\bottomrule 
\end{tabular}
\caption{\small Comparison of \ours{} to an adaptation of the original LaMCTS, operating in \textit{parameter} space, on MiniWorld tasks. SelectObj is uniquely advantageous to parameter-space methods; for this reason, PPO also performs better than our other baselines (but worse than \ours{}) on that environment only. 256 trials per method.}
\label{tab:miniworld_parameter}
\end{table*}

Working in the parameter space can be clearly advantageous when states are relatively simple and low-dimensional, as in the Mujoco environments evaluated in LaMCTS and POPLIN~\cite{wang2019exploring}, or when the policy barely needs to depend on the state at all, as in our SelectObj task (Table \ref{tab:miniworld_parameter}). We designed the SelectObj task as a challenge for path planning algorithms operating in the action space, which struggle to escape the local optimum, but in truth this environment can be trivially solved by simply moving in the same correct direction at every step (toward the far goal). 

On the other hand, more complex policies may be more challenging to learn when the state representation is higher-dimensional, which may be the case in practical tasks. This is the case in our MazeS3 and FourRooms environments, where the state is represented as a top-down image rather than a vector of position and velocity information. Unlike SelectObj, these tasks require navigation around obstacles rather than just moving in a straight line. Despite featurizing with the same randomly initialized CNN as \ours{}, LaMCTS-parameter performs very poorly on MazeS3 and FourRooms in comparison. Additionally, methods like LaMCTS-parameter which use a parameter space must critically depend on the specific parametric form of the policy to be learned (e.g., whether it is a linear policy, a nonlinear policy parameterized by neural networks, etc); therefore, it is not obvious how to take advantage of a latent space which encodes a \textit{sequence} of states and/or actions, which is critical in environments such as our molecular design tasks. 

\subsection{Hyperparameter Sensitivity Analysis}\label{sec:hyperparameter_sensitivity}

Since the $C_p$ parameter is the only additional parameter we tune in \ours{}, we analyze the sensitivity of \ours{}'s performance with respect to $C_p$ on the MiniWorld tasks. Our main results use $C_p=2$ except for SelectObj where we used $C_p=4$; here we run $C_p=1,2,4$ for all three tasks and show the results in Table \ref{tab:miniworld_cp}. \ours{} even with poorly tuned $C_p$ values still substantially outperforms CEM on these tasks with difficult-to-escape local optima.

\begin{table*}[!htb]

\footnotesize
\setlength\tabcolsep{3.5pt}
\centering
\begin{tabular}{p{3cm}p{2cm}p{2cm}p{2.5cm}p{2cm}p{2cm}p{2cm}}
\toprule
& \textbf{MazeS3} & \textbf{FourRooms}  & \textbf{SelectObj} \\ 
\midrule  
CEM & 25.0 $\pm$ 2.7 & 69.9 $\pm$ 2.9  & \phantom{0}0.4 $\pm$ 0.4  \\
\ours{}-$C_p=1$ & 52.7 $\pm$ 3.1 & 89.5 $\pm$ 1.9  & \phantom{0}6.6 $\pm$ 1.6\\
\ours{}-$C_p=2$ & 57.0 $\pm$ 3.1 & 89.1 $\pm$ 2.0  & 23.1 $\pm$ 2.6 \\
\ours{}-$C_p=4$ & 53.5 $\pm$ 3.1 & 87.1 $\pm$ 2.1  & 45.3 $\pm$ 3.1 \\
\bottomrule 
\end{tabular}
\caption{\small Results for MiniWorld tasks using different $C_p$ values for \ours{}. $C_p=2$ corresponds to our main paper results, except for SelectObj where we used $C_p=4$. \ours{} is relatively insensitive to changes in $C_p$ on MazeS3 and FourRooms and only more sensitive on the more difficult SelectObj task. However, even poorly tuned versions of \ours{} outperform CEM, reproduced for baseline comparison. 256 trials per method.}
\label{tab:miniworld_cp}
\end{table*}

\subsection{Max vs. Mean UCB Metric For MCTS}\label{sec:max_vs_mean}

Our theory suggests that the UCB metric for MCTS should be based on the max function value rather than the mean for the deterministic functions that we consider in this work. Figure \ref{fig:region_select} already shows this for MiniWorld; here we show the max vs. mean analysis for all tasks in Tables \ref{tab:miniworld_mean}, \ref{tab:minigrid_mean}, \ref{tab:autophase_mean}, and \ref{tab:molecule_mean}. 

\begin{table*}[!htb]
\footnotesize
\setlength\tabcolsep{3.5pt}
\centering
\begin{tabular}{p{3cm}p{2cm}p{2cm}p{2.5cm}p{2cm}p{2cm}p{2cm}}
\toprule
& \textbf{MazeS3}& \textbf{FourRooms}  & \textbf{SelectObj} \\ 
\midrule  
\ours{}-$\mathrm{mean}$ & 43.4 $\pm$ 3.1&83.6 $\pm$ 2.3&	16.0 $\pm$ 2.3 \\
\ours{} & 57.0 $\pm$ 3.1 & 89.1 $\pm$ 2.0  & 45.3 $\pm$ 3.1 \\
\bottomrule 
\end{tabular}
\caption{\small Results for \ours{} (using max function value for UCB) in MiniWorld compared to \ours{} using the mean function value metric for UCB. \ours{} is substantially better. 256 trials per method.}
\label{tab:miniworld_mean}
\end{table*}

\begin{table*}[!htb]
\caption{\small Results for \ours{} (using max function value for UCB) in MiniGrid compared to \ours{} using the mean function value metric for UCB. \ours{} performs similarly or better. 256 trials per method.}
\footnotesize
\setlength\tabcolsep{3pt}
\label{tab:minigrid_mean}
\centering
\begin{tabular}{p{2cm}p{1.5cm}p{1.5cm}p{1.7cm}p{1.7cm}p{2cm}p{2cm}}
\toprule
& \textbf{DK-6} & \textbf{DK-8} & \textbf{KC-S3R3} & \textbf{KC-S3R4} & \textbf{MR-N4S5} & \textbf{MR-N6}\\ 
\midrule
\ours{}-$\mathrm{mean}$ & 0.98 $\pm$ 0.02	&0.25 $\pm$ 0.13	&-2.36$\pm$0.09&	-4.36$\pm$0.12	&-11.78 $\pm$ 0.77&	-114.63 $\pm$ 4.53 \\
\ours{} & 0.95$\pm$0.03 & 0.46$\pm$0.13 & -2.27$\pm$0.09 & -4.37$\pm$0.13 & -11.68$\pm$0.75 & -113.53$\pm$4.49  \\

\bottomrule 
\end{tabular}
\end{table*} 

\begin{table*}[!htb]

\footnotesize
\setlength\tabcolsep{3.5pt}
\centering
\begin{tabular}{p{1.7cm}p{1cm}p{1cm}p{1.3cm}p{1.3cm}p{1cm}p{1cm}p{1cm}p{1cm}p{1cm}}
\toprule
& \textbf{adpcm} & \textbf{aes} & \textbf{blowfish} & \textbf{dhrystone} & \textbf{gsm} & \textbf{matmul} & \textbf{ mpeg2} & \textbf{qsort} & \textbf{sha} \\ 
\midrule  
\ours{}-$\mathrm{mean}$ & 10501 & 10407 & 176429 & 5740 & 6305 & 8841 & 8281 & 47745 & 209142 \\
\ours{} & 10451 & 9753 & 180179 & 5702 & 6178 & 9644 & 8282 & 47745 & 209142 \\
\bottomrule 
\end{tabular}
\caption{\small Results for \ours{} (using max function value for UCB) in compiler phase ordering compared to \ours{} using the mean function value metric for UCB. The two versions perform similarly. 256 trials per method. 
}
\label{tab:autophase_mean}
\end{table*}

\begin{table*}[!htb]
\footnotesize
\setlength\tabcolsep{3.5pt}
\centering
\begin{tabular}{p{3cm}p{2cm}p{2cm}p{2.5cm}p{2cm}p{2cm}p{2cm}}
\toprule
& \textbf{QED}& \textbf{DRD2}  & \textbf{HIV} & \textbf{SARS} \\ 
\midrule  
\ours{}-$\mathrm{mean}$ & 0.914 $\pm$ 0.002&	0.323 $\pm$ 0.016&	0.406 $\pm$ 0.019&	0.452 $\pm$ 0.010 \\
\ours{} & 0.916 $\pm$ 0.001&	0.648 $\pm$ 0.026&	0.588 $\pm$ 0.020&	0.570 $\pm$ 0.018\\

\bottomrule 
\end{tabular}
\caption{\small Results for \ours{} (using max function value for UCB) in molecular design tasks compared to \ours{} using the mean function value metric for UCB. \ours{} performs similarly on the easiest QED task and substantially better on the others. 256 trials per method.}
\label{tab:molecule_mean}
\end{table*}

\subsection{Latent Space Ablations}\label{sec:latent_space_ablations}

We conduct additional analysis on the use of a latent partition space $\Phi_s$ in the MiniWorld, MiniGrid, and compiler phase ordering tasks in Tables \ref{tab:phis_miniworld} (reproduced from Table \ref{tab:leaf_selection}), \ref{tab:phis_minigrid}, and \ref{tab:phis_autophase} respectively. \ours{} performs similarly or better compared to the version without a latent space; the difference is especially large in MiniWorld. 

\begin{table*}[!htb]
\footnotesize
\setlength\tabcolsep{3.5pt}
\centering
\begin{tabular}{p{3cm}p{2cm}p{2cm}p{2.5cm}p{2cm}p{2cm}p{2cm}}
\toprule
& \textbf{MazeS3}& \textbf{FourRooms}  & \textbf{SelectObj} \\ 
\midrule  
\ours{}-$\mathrm{nolatent}$ & 26.6 $\pm$ 2.8 & 54.3 $\pm$ 3.1  & \phantom{0}2.3 $\pm$ 0.9 \\

\ours{} & 57.0 $\pm$ 3.1 & 89.1 $\pm$ 2.0  & \phantom{0}45.3 $\pm$ 3.1 \\
\bottomrule 
\end{tabular}
\caption{\small Results for \ours{} in MiniWorld compared to \ours{} without the use of a partition latent space $\Phi_s$. \ours{} is substantially better. 256 trials per method.}
\label{tab:phis_miniworld}
\end{table*}

\begin{table*}[!htb]
\caption{\small Results for \ours{} in MiniGrid compared to \ours{} without the use of a partition latent space $\Phi_s$. \ours{} is better in most cases. 256 trials per method.}
\footnotesize
\setlength\tabcolsep{3pt}
\label{tab:phis_minigrid}
\centering
\begin{tabular}{p{2.5cm}p{1.5cm}p{1.5cm}p{1.5cm}p{1.5cm}p{2cm}p{2cm}}
\toprule
& \textbf{DK-6} & \textbf{DK-8} & \textbf{KC-S3R3} & \textbf{KC-S3R4} & \textbf{MR-N4S5} & \textbf{MR-N6}\\ 
\midrule
\ours{}-$\mathrm{nolatent}$ & 0.98$\pm$0.02 & 0.25$\pm$0.13 & -2.36$\pm$0.09 & -4.36$\pm$0.12 & -11.78$\pm$0.77 & -114.63$\pm$4.53 \\
\ours{} & 0.95$\pm$0.03 & 0.46$\pm$0.13 & -2.27$\pm$0.09 & -4.37$\pm$0.13 & -11.68$\pm$0.75 & -113.53$\pm$4.49  \\

\bottomrule 
\end{tabular}
\end{table*} 

\begin{table*}[!htb]

\footnotesize
\setlength\tabcolsep{3.5pt}
\centering
\begin{tabular}{p{1.7cm}p{1cm}p{1cm}p{1.3cm}p{1.3cm}p{1cm}p{1cm}p{1cm}p{1cm}p{1cm}}
\toprule
& \textbf{adpcm} & \textbf{aes} & \textbf{blowfish} & \textbf{dhrystone} & \textbf{gsm} & \textbf{matmul} & \textbf{ mpeg2} & \textbf{qsort} & \textbf{sha} \\ 
\midrule  
\ours{}-$\mathrm{nolatent}$ & 10451 & 10263 & 176429 & 6617 & 6169 & 8841 & 8280 & 52137 & 476269 \\
\ours{} & 10451 & 9753 & 180179 & 5702 & 6178 & 9644 & 8282 & 47745 & 209142 \\
\bottomrule 
\end{tabular}
\caption{\small Results for \ours{} in compiler phase ordering compared to \ours{} without the use of a partition latent space $\Phi_s$. The two versions are comparable in most cases. 
}
\label{tab:phis_autophase}
\end{table*}

Additionally, it is possible to use a separate sampling latent space $\Phi_h$, as illustrated here in MiniGrid (Table \ref{tab:phih}), although we do not do so in our main results to keep consistency between latent spaces across tasks. The version with a latent space (a reversible flow model here) performs slightly better. 

\begin{table*}[!htb]
\caption{\small Results for \ours{} in MiniGrid compared to \ours{} with the use of a sampling latent space. While the differences are small on most cases, \ours{} with a latent $\Phi_h$ is better on all tasks. 256 trials per method.}
\footnotesize
\setlength\tabcolsep{3pt}
\label{tab:phih}
\centering
\begin{tabular}{p{2.5cm}p{1.5cm}p{1.5cm}p{1.5cm}p{1.5cm}p{2cm}p{2cm}}
\toprule
& \textbf{DK-6} & \textbf{DK-8} & \textbf{KC-S3R3} & \textbf{KC-S3R4} & \textbf{MR-N4S5} & \textbf{MR-N6}\\ 
\midrule
\ours{}-$\mathrm{latent}\Phi_h$ & 0.97$\pm$0.02 & 0.48$\pm$0.11 & -2.19$\pm$0.15 & -4.22$\pm$0.13 & -10.68$\pm$0.68 & -112.72$\pm$4.46 \\
\ours{} & 0.95$\pm$0.03 & 0.46$\pm$0.13 & -2.27$\pm$0.09 & -4.37$\pm$0.13 & -11.68$\pm$0.75 & -113.53$\pm$4.49  \\

\bottomrule 
\end{tabular}
\end{table*} 

We additionally ablate on the latent space (used for both partitioning and sampling) in the easiest of our molecular design tasks, the QED property. Specifically, we build the molecular SMILES string autoregressively, using a discrete action space with 10 choices: the 9 most common characters in molecular SMILES strings, in addition to an end token. (We limit the space of possible characters in order to increase the chances of generating well-formed SMILES strings.) We optimize in a continuous space of action probabilities as in MiniGrid, and allow a maximum length of 50 characters.

The poor results demonstrate the absolute \textit{necessity} of a latent space in the molecular design task (Table \ref{tab:molecule_nolatent}). While typical molecular SMILES strings for this task are 30 to 50 characters long, both \ours{} and baselines struggle to generate well-formed strings even of length 3 to 5 without the pre-trained latent space. Accordingly, the performance is drastically lower for all methods. 

\begin{table*}[htbp]
\footnotesize
\setlength\tabcolsep{3.5pt}
\centering
\begin{tabular}
{lcccc}
\toprule
& \textbf{QED} \\
\midrule
RS-$\mathrm{no}\Phi$ & 0.417 $\pm$ 0.002 \\
CEM-$\mathrm{no}\Phi$ & 0.411 $\pm$ 0.002  \\
CMA-ES-$\mathrm{no}\Phi$ & 0.403 $\pm$ 0.003 \\
\ours{}-$\mathrm{no}\Phi$ & 0.416 $\pm$ 0.003 & \\
\midrule  
RS & 0.897 $\pm$ 0.001 \\
CEM & 0.906 $\pm$ 0.003  \\
CMA-ES & 0.888 $\pm$ 0.004 \\
\ours{} & 0.916 $\pm$ 0.001 & \\

\bottomrule 
\end{tabular}
\caption{\small Mean and standard deviation across 128 random seeds for \ours{} and baselines on QED, with and without the pre-trained latent space.}
\label{tab:molecule_nolatent}
\end{table*}


\subsection{Other Inner Solvers}

In this work we have used CMA-ES as the inner solver due to its speed and acceptable performance. The original LaMCTS work used the TuRBO solver~\cite{eriksson2019scalable}, which is prohibitively slow for many of our experiments. Nevertheless, we have run experiments on the MiniWorld tasks using a smaller number of trials to check performance using an alternate inner solver, both on \ours{} and also on the LaMCTS baseline (Table \ref{tab:turbo}. In most cases TuRBO performs equal or worse; we hypothesize this is because our tasks use a smaller query budget per trial compared to the original LaMCTS work, causing TuRBO to use too large a fraction of its total budget in each inner loop. 

\begin{table*}[!htb]
\footnotesize
\setlength\tabcolsep{3.5pt}
\centering
\begin{tabular}{p{3cm}p{2cm}p{2cm}p{2.5cm}p{2cm}p{2cm}p{2cm}}
\toprule
& \textbf{MazeS3}& \textbf{FourRooms}  & \textbf{SelectObj} \\ 
\midrule 
LaMCTS-TuRBO & 21.9 $\pm$ \phantom{0}7.3 & \phantom{0}0.0 $\pm$ \phantom{0}0.0 & \phantom{0}0.0 $\pm$ \phantom{0}0.0 \\
LaMCTS & 23.4 $\pm$ \phantom{0}2.6 & 20.3 $\pm$ \phantom{0}2.5 &\phantom{0}0.8 $\pm$ \phantom{0}0.6 \\
\ours{}-TuRBO & 52.2 $\pm$ 10.4	& 63.6 $\pm$ 14.5	& 47.1 $\pm$ 12.1 \\
\ours{} & 57.0 $\pm$ \phantom{0}3.1 & 89.1 $\pm$ \phantom{0}2.0  & 45.3 $\pm$ \phantom{0}3.1 \\
\bottomrule 
\end{tabular}
\caption{\small Results for \ours{} and LaMCTS with CMA-ES and TuRBO inner solvers in MiniWorld, with fewer trials for TuRBO due to computational expense. TuRBO generally performs equal or worse on this task. }
\label{tab:turbo}
\end{table*}

\section{Baseline Details and Hyperparameter Tuning}\label{sec:hyperparameter_tuning}

\textbf{\ours{}.} For our method, we try $C_p$ in $\{0.5, 1, 2, 4\}$. If the search space is not explicitly bounded, we sample the first $N_{init}$ points used to initialize the partition tree using the same $\sigma$ as CEM. Note $N_{init}$ is not tuned; we use 5 for compiler phase optimization where our query budget is only 50, and 50 elsewhere. No other hyperparameters are tuned. 

\textbf{LaMCTS.} Detailed in Algorithm \ref{alg:plalam}, where we summarize the changes made in \ours{} compared to LaMCTS. It is tuned similarly to our own method \ours{}.

\textbf{RS.} The simplest baseline, in which one simply samples random trajectories and in the end returns the best-performing among them. We do not tune this baseline. 

\textbf{CEM.} An evolutionary method which tracks a population of $N$ samples. At each step, it selects the best $N_e$ samples from its population to initialize the mean $\mu$ for the next generation of $N$ samples, drawn from a Gaussian distribution with standard deviation $\sigma$. However, while too-small $\sigma$ may prevent CEM from escaping local optima, too-large $\sigma$ may yield results little better than random shooting. We find that the choice of $\sigma$ is critical to CEM's performance in our test environments. Therefore, we systematically tune $\sigma$ when running CEM in all environments (checking $\{1, 2, 4, 8\}$). While other parameters such as $N$ and $N_e$ are also tunable, we find that these make a smaller difference, so we did not tune them extensively. 

\textbf{CMA-ES.} A more complex evolutionary method which can be viewed as a variant of CEM. After providing an initial $\mu$ and $\sigma$ for the first generation, CMA-ES determines its own $\sigma$ automatically afterward, while also fitting additional parameters. Even so, we find that its performance is highly sensitive to the initial $\sigma$, and we tune this parameter in the same way that we do for CEM. 

\textbf{VOOT.} An MCTS method which builds a tree on actions. We tune the exploration parameter in the VOO submodule, trying values in $\{0.1, 0.3, 0.5\}$. 

\textbf{RandDOOT.} An MCTS method which builds a tree on actions similar to VOOT, but which splits using axis-aligned boundaries rather than splitting into Voronoi regions; used as a baseline in their original paper \cite{voot}. We did not tune hyperparameters.

\textbf{iLQR.} A gradient-based optimization method for continuously optimizing the planned trajectory, which we run to convergence. It cannot easily escape local optima. As the performance was relatively insensitive to hyperparameters, we did not systematically tune. 

\textbf{PPO.} A standard reinforcement learning algorithm which operates in the parameter space, unlike our other baselines. Since PPO is relatively robust to hyperparameters~\cite{schulman2017proximal}, we didn't systematically tune. 

\section{Additional Environment Details}
\subsection{MiniWorld}\label{sec:miniworld_details}

We modified the original MiniWorld environments to have continuous action spaces and to have more consistent difficulty across random seeds, as follows. 

\textbf{MazeS3}. A 3x3 maze of rooms which are each 3 units by 3 units, with walls between rooms being 0.25 units wide. The maze is constructed by recursive backtracking from the top left room. The agent begins in the top left room and the goal is placed in the last room generated in the maze construction. The step size is 0.3 units and the environment length is 216 steps. The final sparse reward is the Euclidean distance between the agent and the goal if the goal is not reached, otherwise a fixed reward of 1 penalized by a fraction of the number of steps taken, down to a minimum of 0.8. 

\textbf{FourRooms}. A 14x14 unit space with a 6x6 room in each corner. Adjacent rooms are connected by a width-2 corridor along the outer edge of the space, i.e., there is a cross-shaped obstacle in the center. The agent starts in a random location and the goal is in the diametrically opposite location. The step size is 0.2 units and the environment length is 250 steps. The final sparse reward is the Euclidean distance between the agent and the goal if the goal is not reached, otherwise a fixed reward of 1 penalized by a fraction of the number of steps taken, down to a minimum of 0.8. 

\textbf{SelectObj}. A 12x12 unit open space. The agent starts in the center. The near goal is 4 to 4.5 units away and the far goal is 5 to 5.5 units away. The two goals are 3 to 4 units away from each other. The step size is 0.05 units and the environment length is 200 steps. Unlike MazeS3 and FourRooms, SelectObj does not terminate upon reaching a goal. The final sparse reward is the Euclidean distance between the final agent position and the closest goal to the final position, plus a fixed reward of 1 for being within 1 unit of the original far goal. 

\subsection{Compiler Phase Ordering}\label{sec:autophase_details}
The action space consists of 46 different program transformations, and a trajectory consists of 45 transformations (quite short, considering many transformations have no effect unless applied in a specific order). The reward is the difference between the original and final number of execution cycles. Since the environment is deterministic, we only run 1 trial for each method. Thus far we have followed the setup in \cite{haj2020autophase}; however, unlike \cite{haj2020autophase}, we allow a budget of only 50 trajectory queries.

\end{document}